\documentclass[twoside]{article}

\usepackage[accepted]{aistats2024}

\usepackage[round]{natbib}

\bibliographystyle{apalike}

\usepackage{graphicx}
\usepackage{amsfonts}
\usepackage[pagecolor=none,dvipsnames]{xcolor}
\usepackage{hyperref}
\usepackage{cleveref}
\usepackage{autonum}
\usepackage{multirow}
\usepackage{algorithm}
\usepackage{algpseudocode}
\usepackage{listings}
\usepackage{amsthm}
\usepackage{bm}

\newcommand{\R}{\mathbb R}
\newcommand{\E}{\mathbb E}
\newcommand{\N}{\mathcal N}
\newcommand{\KL}{\mathcal{D}_\text{KL}}
\newcommand{\tr}{\operatorname{tr}}
\newcommand{\loss}{\mathcal{L}}

\newtheorem{theorem}{Theorem}[section]
\newtheorem{corollary}{Corollary}[theorem]

\definecolor{C0}{HTML}{1f77b4}
\definecolor{C1}{HTML}{ff7f0e}
\definecolor{C2}{HTML}{2ca02c}
\definecolor{C3}{HTML}{d62728}
\definecolor{C4}{HTML}{9467bd}
\definecolor{C5}{HTML}{8c564b}
\definecolor{C6}{HTML}{e377c2}
\definecolor{C7}{HTML}{7f7f7f}
\definecolor{C8}{HTML}{bcbd22}
\definecolor{C9}{HTML}{17becf}

\crefname{lstlisting}{listing}{listings}
\Crefname{lstlisting}{Listing}{Listings}

\definecolor{ipython_keyword}{RGB}{0,128,0}
\definecolor{ipython_builtin}{RGB}{0,128,0}
\definecolor{ipython_comment}{RGB}{61,123,123}
\definecolor{ipython_string}{RGB}{186,33,33}

\lstset{ 
  language=Python,
  basicstyle=\ttfamily\small,
  commentstyle=\color{ipython_comment},
  keywordstyle=\color{ipython_keyword},
  stringstyle=\color{ipython_string},
  showstringspaces=false,
  numbers=none,
  frame=single,
  captionpos=b,
  breaklines=true
}

\makeatletter
\newcommand\unnumberedfootnote[1]{%
  \begingroup
  \let\@makefnmark\relax  %
  \let\@thefnmark\relax   %
  \footnotetext{#1}%
  \endgroup
}
\makeatother

\begin{document}

\runningauthor{ Felix Draxler, Peter Sorrenson, Lea Zimmermann, Armand Rousselot, Ullrich Köthe }

\twocolumn[

\aistatstitle{Free-form Flows: Make Any Architecture a Normalizing Flow}

\aistatsauthor{ Felix Draxler*, Peter Sorrenson*, Lea Zimmermann, Armand Rousselot, Ullrich Köthe }

\aistatsaddress{ Computer Vision and Learning Lab, Heidelberg University\\**Equal contribution } ]

\begin{abstract}

    Normalizing Flows are generative models that directly maximize the likelihood. Previously, the design of normalizing flows was largely constrained by the need for analytical invertibility. We overcome this constraint by a training procedure that uses an efficient estimator for the gradient of the change of variables formula. This enables any dimension-preserving neural network to serve as a generative model through maximum likelihood training. Our approach allows placing the emphasis on tailoring inductive biases precisely to the task at hand. Specifically, we achieve excellent results in molecule generation benchmarks utilizing $E(n)$-equivariant networks at greatly improved sampling speed. Moreover, our method is competitive in an inverse problem benchmark, while employing off-the-shelf ResNet architectures. We publish our code at \url{https://github.com/vislearn/FFF}.
\end{abstract}

\section{INTRODUCTION}

\begin{figure}[th]
    \centering
    \includegraphics[width=\linewidth]{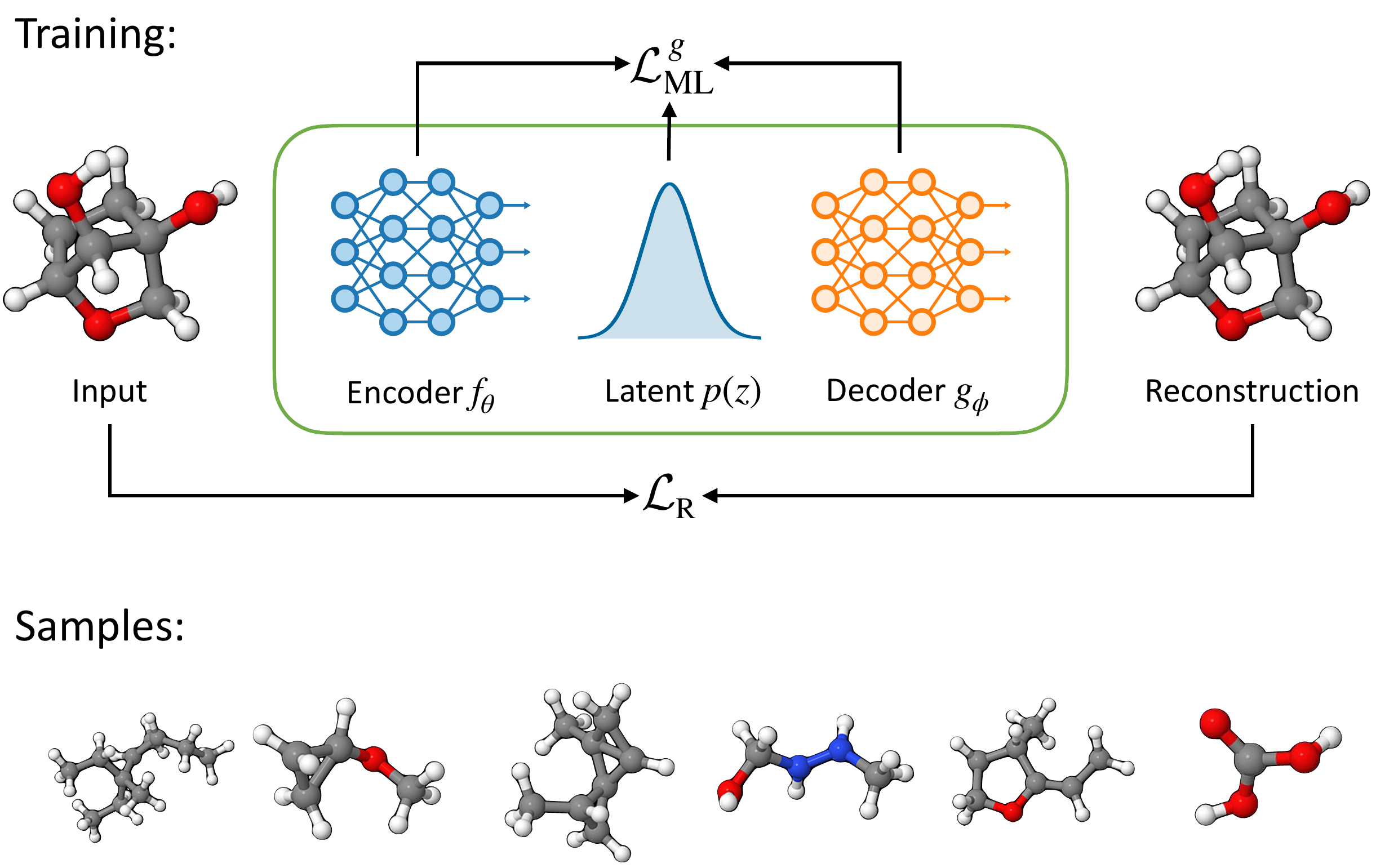}
    \caption{Free-form flows (FFF) train a pair of encoder and decoder neural networks with a fast maximum likelihood estimator~$\loss_{\operatorname{ML}}^{g}$ and reconstruction loss~$\loss_{\operatorname{R}}$. This enables training any dimension-preserving architecture as a one-step generative model. For example, an equivariant graph neural network can be trained on the QM9 dataset to generate molecules by predicting atom positions and properties in a single decoder evaluation. \textit{(Bottom)} Stable molecules sampled from our $E(3)$-FFF trained on the QM9 dataset for several molecule sizes.}
    \label{fig:molecule-qm9}
\end{figure}

Generative models have actively demonstrated their utility across diverse applications, successfully scaling to high-dimensional data distributions in scenarios ranging from image synthesis to molecule generation \citep{rombach2022high, hoogeboom2022equivariant}. Normalizing flows \citep{dinh2015nice, rezende2015variational} have helped propel this advancement, particularly in scientific domains, enabling practitioners to optimize data likelihood directly and thereby facilitating a statistically rigorous approach to learning complex data distributions. A major factor that has held normalizing flows back as other generative models (notably diffusion models) increase in power and popularity has been that their expressivity is greatly limited by architectural constraints, namely those necessary to ensure bijectivity and compute Jacobian determinants.

In this work, we contribute an approach that frees normalizing flows from their conventional architectural confines, thereby introducing a flexible new class of maximum likelihood models. For model builders, this shifts the focus away from meeting invertibility requirements towards incorporating the best inductive biases to solve the problem at hand. Our aim is that the methods introduced in this paper will allow practitioners to spend more time incorporating domain knowledge into their models, and allow more problems to be solved via maximum likelihood estimation.

The key methodological innovation is the adaptation of a recently proposed method for training autoencoders \citep{sorrenson2024lifting} to dimension-preserving models. The trick is to estimate the gradient of the encoder's Jacobian log-determinant by a cheap function of the encoder and decoder Jacobians. We show that in the full-dimensional context many of the theoretical difficulties that plagued the interpretation of the bottlenecked autoencoder model disappear, and the optimization can be interpreted as a relaxation of normalizing flow training, which is tight at the original solutions. 

In molecule generation, where rotational equivariance has proven to be a crucial inductive bias, our approach outperforms traditional normalizing flows and generates valid samples more than an order of magnitude faster than previous approaches. Further, experiments in simulation-based inference (SBI) underscore the model’s versatility. We find that our training method achieves competitive performance with minimal fine-tuning requirements.

In summary our contributions are as follows:
\begin{itemize}
    \item We remove all architectural constraints from normalizing flows by introducing maximum-likelihood training for free-form architectures. We call our model the free-form flow (FFF), see \cref{fig:molecule-qm9,sec:method}.
    \item We prove that the training has the same minima as traditional normalizing flow optimization, provided that the reconstruction loss is minimal, see \cref{sec:theory}.
    \item We demonstrate competitive performance with minimal fine-tuning on inverse problems and molecule generation benchmarks, outperforming ODE-based models in the latter. Compared to a diffusion model, our model produces stable molecules more than two orders of magnitude faster. See \cref{sec:experiments}.

\end{itemize}

\section{RELATED WORK}

Normalizing flows traditionally rely on specialized architectures that are invertible and have a manageable Jacobian determinant (see \cref{sec:normalizing-flows}). See \cite{papamakarios2021normalizing, kobyzev2021normalizing} for an overview.

One body of work builds invertible architectures by concatenating simple layers (coupling blocks) which are easy to invert and have a triangular Jacobian, which makes computing determinants easy \citep{dinh2015nice}. Expressive power is obtained by stacking many layers and their universality has been confirmed theoretically \citep{huang2020augmented, teshima2020couplingbased, koehler2021representational, draxler2022whitening, draxler2023convergence}. Many choices for coupling blocks have been proposed such as MAF \cite{papamakarios2017masked}, RealNVP \citep{dinh2017density}, Glow \citep{kingma2018glow}, Neural Spline Flows \citep{durkan2019neural}, see \cite{kobyzev2021normalizing} for an overview. Instead of analytical invertibility, our model relies on the reconstruction loss to enforce approximate invertibility.

Another line of work ensures invertibility by using a ResNet structure and limiting the Lipschitz constant of each residual layer \citep{behrmann2019invertible, chen2019residual}. Somewhat similarly, neural ODEs \citep{chen2018neural, grathwohl2019ffjord} take the continuous limit of ResNets, guaranteeing invertibility under mild conditions. Each of these models requires evaluating multiple steps during training and thus become quite expensive. In addition, the Jacobian determinant must be estimated, adding overhead. Like these methods, we must estimate the gradient of the Jacobian determinant, but can do so more efficiently. Flow Matching \cite{lipman2023flow, liu2023learning, albergo2023building} improves training of these continuous normalizing flows in speed and quality, but still involves an expensive multi-step sampling process. By construction, our approach consists of a single model evaluation, and we put no constraints on the architecture apart from inductive biases indicated by the task at hand.

Two interesting methods \citep{gresele2020relative, keller2021self} compute or estimate gradients of the Jacobian determinant but are severely limited to architectures with exclusively square weight matrices and no residual blocks. We have no architectural limitations besides preserving dimension. Intermediate activations and weight matrices may have any dimension and any network topology is permitted.

\section{METHOD}
\label{sec:method}

\subsection{Normalizing Flows}
\label{sec:normalizing-flows}

Normalizing flows \citep{rezende2015variational} are generative models that learn an invertible function $f_\theta(x): \mathbb R^D \to \mathbb R^D$ mapping samples $x$ from a given data distribution $q(x)$ to latent codes $z$. The aim is that $z$ follows a simple target distribution, typically the multivariate standard normal. 

Samples from the resulting generative model $p_\theta(x)$ are obtained by mapping samples of the simple target distribution $p(z)$ through the inverse of the learned function:
\begin{equation}
    x = f_\theta^{-1}(z) \sim p_\theta(x) \text{ for } z \sim p(z).
\end{equation}
This requires a tractable inverse. Traditionally, this was achieved via invertible layers such as coupling blocks \citep{dinh2015nice} or by otherwise restricting the function class. We replace this constraint via a simple reconstruction loss, and learn a second function $g_\phi \approx f_\theta^{-1}$ as an approximation to the exact inverse.

A tractable determinant of the Jacobian of the learned function is required to account for the change in density. As a result, the value of the model likelihood is given by the change of variables formula for invertible functions:
\begin{equation}
    \label{eq:cov}
    p_\theta(x) = p(Z=f_\theta(x)) |J_\theta(x)|.
\end{equation}
Here, $J_\theta(x)$ denotes the Jacobian of $f_\theta$ at $x$, and $|\cdot|$ the absolute value of its determinant.

Normalizing Flows are trained by minimizing the Kullback-Leibler (KL) divergence between the true and learned distribution. This is equivalent to maximizing the likelihood of the training data:
\begin{align} &
    \KL(q(x) \| p_\theta(x))
    = \mathbb E_{x \sim q(x)}[\log q(x) - \log p_\theta(x)] \\
    &= \mathbb E_{x}[ - \log p(f_\theta(x)) - \log |J_\theta(x)|] + \operatorname{const}.
    \label{eq:maximum-likelihood}
\end{align}
By \cref{eq:cov}, this requires evaluating the determinant of the Jacobian $|J_\theta(x)|$ of $f_\theta$ at $x$. If we want to compute this exactly, we need to compute the full Jacobian matrix, requiring $D$ backpropagations through $f_\theta$. This linear scaling with dimension is prohibitive for most modern applications. The bulk of the normalizing flow literature is therefore concerned with building invertible architectures that are expressive and allow computing the determinant of the Jacobian more efficiently. We circumvent this via a trick that allows efficient estimation of the gradient $\nabla_\theta \log |J_\theta(x)|$, noting that this quantity is sufficient to perform gradient descent.

\subsection{Gradient trick}
\label{sec:gradient-trick}

The results of this section are an adaptation of results in \cite{caterini2021rectangular} and \cite{sorrenson2024lifting}.

Here, we derive how to efficiently estimate the gradient of the maximum-likelihood loss in \cref{eq:maximum-likelihood}, even if the architecture does not yield an efficient way to compute the change of variables term $\log|J_\theta(x)|$. 
We avoid this computation by estimating the gradient of $\log|J_\theta(x)|$ via a pair of vector-Jacobian and Jacobian-vector products, which are readily available in standard automatic differentiation software libraries.

\paragraph{Gradient via trace estimator} 
\begin{theorem}
    Let $f_\theta: \R^D \to \R^D$ be a $C^1$ invertible function parameterized by $\theta$. Then, for all $x \in \R^D$:
    \begin{equation}
        \nabla_{\theta_i} \log|J_\theta(x)| = \tr\!\left(
            (\nabla_{\theta_i} J_\theta(x))
            (J_\theta(x))^{-1}
        \right).
        \label{eq:gradient-trick}
    \end{equation}
\end{theorem}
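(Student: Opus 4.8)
The plan is to use Jacobi's formula for the derivative of a determinant, combined with the fact that the sign of the determinant of a $C^1$ invertible function is locally constant. First I would observe that since $f_\theta$ is invertible and $C^1$, its Jacobian $J_\theta(x)$ is nonsingular for all $x$, so $\det J_\theta(x)$ is a nonzero real number depending continuously on $(\theta, x)$; hence on any connected component its sign is fixed, and we may write $\log|J_\theta(x)| = \log|\det J_\theta(x)| = \log(\sigma \det J_\theta(x))$ where $\sigma = \pm 1$ is locally constant and therefore annihilated by $\nabla_{\theta_i}$. This reduces the problem to differentiating $\log \det$ of a matrix-valued function.

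Next I would recall Jacobi's formula: for a differentiable family of invertible matrices $M(t)$,
\begin{equation}
    \frac{d}{dt} \log \det M(t) = \tr\!\left( M(t)^{-1} \frac{dM(t)}{dt} \right),
\end{equation}
which follows from $\frac{d}{dt}\det M = \det(M)\,\tr(M^{-1} \dot M)$ (itself a consequence of the cofactor expansion / chain rule on the multilinear determinant map). Applying this with $M(\theta_i) = \sigma J_\theta(x)$ at fixed $x$, the scalar $\sigma$ cancels between $M^{-1}$ and $\dot M$, giving
\begin{equation}
    \nabla_{\theta_i} \log|J_\theta(x)| = \tr\!\left( (J_\theta(x))^{-1} (\nabla_{\theta_i} J_\theta(x)) \right),
\end{equation}
and since the trace is invariant under cyclic permutation this equals $\tr\!\left( (\nabla_{\theta_i} J_\theta(x)) (J_\theta(x))^{-1} \right)$, which is exactly \cref{eq:gradient-trick}.

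The only genuine subtlety — and the step I would be most careful about — is the regularity needed to make $\nabla_{\theta_i} J_\theta(x)$ meaningful: the theorem hypothesis is stated as $f_\theta \in C^1$, but $J_\theta(x)$ is itself a derivative of $f_\theta$ (in $x$), so differentiating it again in $\theta$ implicitly requires $f_\theta$ to be $C^1$ jointly in $(\theta, x)$ with enough smoothness (e.g. the mixed partials $\partial_{\theta_i}\partial_{x_j} f_\theta$ to exist and be continuous, so that $\partial_{\theta_i} J_\theta = \partial_{\theta_i}\partial_x f_\theta$ is well-defined). I would state this as a standing smoothness assumption and note that it holds for essentially any neural network built from smooth activations; given it, Jacobi's formula applies verbatim. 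Everything else is a one-line computation, so there is no real obstacle beyond bookkeeping the sign and the differentiation-under-the-derivative regularity.
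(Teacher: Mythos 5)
Your proposal is correct and follows essentially the same route as the paper's proof: apply Jacobi's formula for $\frac{\mathrm{d}}{\mathrm{d}t}\log|\det A(t)|$ with $A = J_\theta(x)$ and $t = \theta_i$, then use cyclicity of the trace. The extra bookkeeping you add (the locally constant sign of the determinant and the mixed-partial regularity needed for $\nabla_{\theta_i} J_\theta$) is sound additional care, not a different argument.
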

The proof is by direct application of Jacobi's formula, see \cref{app:gradient-via-trace}. This is not a simplification per se, given that the RHS of \cref{eq:gradient-trick} now involves the computation of both the Jacobian as well as its inverse. However, we can estimate it via the Hutchinson trace estimator (where we omit dependence on $x$ for simplicity):
\begin{align}
    \tr\!\left(
        (\nabla_{\theta_i} J_\theta) 
        J_\theta^{-1}
    \right) &
    =
    \E_{v}[v^T (\nabla_{\theta_i} J_\theta) J_\theta^{-1} v] \\&
    \approx \frac{1}{K} \sum_{k=1}^K v_k^T (\nabla_{\theta_i} J_\theta) J_\theta^{-1}  v_k.
\end{align}
Now all we require is computing the dot products $v^T (\nabla_{\theta_i} J_\theta)$ and $J_\theta^{-1} v$, where the random vector $v \in \R^D$ must have unit covariance.

\paragraph{Matrix inverse via function inverse} To compute $J_\theta^{-1} v$ we note that, when $f_\theta$ is invertible, the matrix inverse of the Jacobian of $f_\theta$ is the Jacobian of the inverse function $f_\theta^{-1}$:
\begin{equation}
    J_\theta^{-1}(x) = (\nabla_x f_\theta(x))^{-1} = \nabla_z f_\theta^{-1}\big(z = f_\theta(x)\big).
\end{equation}

This means that the product $J_\theta^{-1} v$ is simply the dot product of the Jacobian of $f_\theta^{-1}$ with the vector $v$. This Jacobian-vector product is readily available via forward automatic differentiation.

\paragraph{Use of stop-gradient} We are left with computing the dot product $v^T (\nabla_{\theta_i} J_\theta)$. Since $v^T$ is independent of $\theta$, we can draw it into the gradient $v^T (\nabla_{\theta_i} J_\theta)  = \nabla_{\theta_i} (v^T J_\theta)$. This vector-Jacobian product can be again readily computed, this time with backward automatic differentiation.

In order to implement the final gradient with respect to the flow parameters $\theta$, we draw the derivative with respect to parameters out of the trace, making sure to prevent gradient from flowing to $J_\theta^{-1}$ by wrapping it in a stop-gradient operation \texttt{SG}:
\begin{align}
    \tr\!\left(
        (\nabla_{\theta_i} J_\theta)
        J_\theta^{-1}
    \right) &
    =
    \nabla_{\theta_i} \tr(
        J_\theta
        \mathtt{SG}(J_\theta^{-1})
    ) \\&
    \approx
    \nabla_{\theta_i} \frac{1}{K} \sum_{k=1}^K
        v_k^T J_\theta 
        \mathtt{SG}(J_\theta^{-1} v_k).
\end{align}

\paragraph{Summary} The above argument shows that
\begin{equation}
    \nabla_{\theta_i} \log|J_\theta(x)| 
    \approx
    \nabla_{\theta_i} \frac{1}{K} \sum_{k=1}^K
        v_k^T J_\theta 
        \mathtt{SG}(J_\theta^{-1} v_k).
    \label{eq:change-of-variables-gradient}
\end{equation}
Instead of computing the full Jacobian $J_\theta(x)$, which involved as many backpropagation steps as dimensions, we are left with computing just one vector-Jacobian product 
and one Jacobian-vector product 
for each $k$.
In practice, we find that setting $K=1$ is sufficient and we drop the summation over $k$ for the remainder of this paper. We provide an ablation study on the effect of $K$ in \cref{app:ablation}.

This yields the following maximum likelihood training objective, whose gradients are an unbiased estimator for the true gradients from exact maximum likelihood as in \cref{eq:maximum-likelihood}:
\begin{equation}
    \label{eq:loss-ML-f-inv}
    \loss_{\text{ML}}^{f^{-1}} = \E_{x,v}[-\log p(f_\theta(x)) -  v^T J_\theta \mathtt{SG}(J_\theta^{-1} v) ].
\end{equation}
This result enables training normalizing flow architectures with a tractable inverse function, but whose Jacobian determinant is not easily accessible.
We now move on to show how this gradient estimator can be adapted for free-form dimension-preserving neural networks.

\subsection{Free-form Flows (FFF)}
\label{sec:fff}

\begin{algorithm}[t]
\caption{FFF loss function. Vector-Jacobian product = \texttt{vjp}; Jacobian-vector product = \texttt{jvp}. Time and space complexity are $\mathcal{O}(D)$.}
\label{alg:loss}
\begin{algorithmic}

\Function{Loss}{$x, f_\theta, g_\phi, \beta$}
    \State $v \sim p(v)$\Comment{$\E[vv^T] = I$}
    \State $z, v_1 \gets \texttt{vjp}(f_\theta, x, v)$\Comment{$z = f_\theta(x), v_1^T = v^T \frac{\partial z}{\partial x}$}
    \State $\hat x, v_2 \gets \texttt{jvp}(g_\phi, z, v)$\Comment{$\hat x = g_\phi(z), v_2 = \frac{\partial \hat x}{\partial z} v$}
    \State $\loss_{\text{ML}}^g \gets \frac{1}{2} \lVert z \rVert^2 - \texttt{SG}(v_2^T) v_1$\Comment{stop gradient to $v_2$}
    \State $\loss_\text{R} \gets \lVert \hat x - x \rVert^2$
    \State $\loss^g \gets \loss_\text{ML}^g + \beta \loss_\text{R}$
    \State \textbf{return} $\loss^g$
\EndFunction
\end{algorithmic}
\end{algorithm}

The previous section assumed that we have access to both $f_\theta$ and its analytic inverse $f_\theta^{-1}$. Typically an analytic inverse is obtained by explicitly constructing invertible neural networks (INNs) or defining the flow as a differential equation with a known reverse time process (Neural ODEs). In contrast, we drop the assumption of an analytic inverse and replace $f_\theta^{-1}$ with a learned inverse $g_\phi \approx f_\theta^{-1}$. We ensure that (i) $f_\theta$ is invertible and that (ii) $g_\phi \approx f_\theta^{-1}$ through a reconstruction loss:
\begin{equation}
    \label{eq:loss-R}
    \loss_\text{R} = \tfrac12 \mathbb E_{x}[\| x - g_\phi(f_\theta(x)) \|^2].
\end{equation}
This removes all architectural constraints from $f_\theta$ and $g_\phi$ except from preserving the dimension.

Similarly to \cite{sorrenson2024lifting}, the replacement $g_\phi \approx f_\theta^{-1}$ leads to a modification of $\loss_\text{ML}^{f^{-1}}$, where we replace $J_\theta^{-1}$ by $J_\phi$, where $J_\phi$ is shorthand for the Jacobian of $g_\phi$ evaluated at $f_\theta(x)$:
\begin{equation}
    \label{eq:loss-ML-g}
    \loss_{\text{ML}}^g = \E_{x,v}[-\log p(f_\theta(x)) - v^T J_\theta \mathtt{SG}(J_\phi v) ]
\end{equation}
Combining the maximum likelihood (\cref{eq:loss-ML-f-inv,eq:loss-ML-g}) and reconstruction (\cref{eq:loss-R}) components of the loss leads to the following losses:
\begin{equation}
    \label{eq:loss-f-inv-and-g}
    \loss^{f^{-1}} = \loss_\text{ML}^{f^{-1}} + \beta \loss_\text{R} \quad \text{and} \quad \loss^g = \loss_\text{ML}^g + \beta \loss_\text{R}
\end{equation}
where the two terms are traded off by a hyperparameter $\beta$. We optimize $\loss^g$ with the justification that it has the same critical points as $\loss^{f^{-1}}$ (plus additional ones which aren't a problem in practice, see \cref{sec:critical-points}).

\subsubsection{Likelihood Calculation}

Once training is completed, our generative model involves sampling from the latent distribution and passing the samples through the decoder $g_\phi$.

In order to calculate the likelihoods induced by $g_\phi$, we can use the change of variables formula:
\begin{align}
    p_\phi(X = x) &
    = p(Z = g_\phi^{-1}(x)) |J_\phi(g_\phi^{-1}(x))| \\&
    \approx p(Z = f_\theta(x)) |J_\phi(f_\theta(x))| 
\end{align}
where the approximation is due to $g_\phi^{-1} \approx f_\theta$.

\begin{algorithm}[t]
\caption{FFF likelihood calculation: returns an approximation of $\log p_\phi(x)$. Time complexity is $\mathcal{O}(D^3)$ and space complexity is $\mathcal{O}(D^2)$.}
\label{alg:likelihood}
\begin{algorithmic}
\Function{LogLikelihood}{$x, f_\theta, g_\phi$}
    \State $z \gets f_\theta(x)$
    \State $J_\phi \gets \texttt{jacobian}(g_\phi, z)$\Comment{$J_\phi = \frac{\partial g_\phi(z)}{\partial z}$}
    \State $\ell \gets -\frac{1}{2} \lVert z \rVert^2 - \frac{D}{2}\log(2\pi) + \log|J_\phi|$
    \State \textbf{return} $\ell$
\EndFunction
\end{algorithmic}
\end{algorithm}

In the next section, we theoretically justify the use of free-form architectures and the combination of maximum likelihood with a reconstruction loss.

\section{THEORY}
\label{sec:theory}

In this section we provide three theorems which emphasize the validity of our method. Firstly, we show that optimizing the loss function $\loss^{f^{-1}}$ (using an exact inverse) is a bound on the spread divergence between the data and generating distributions. Secondly, we show under what conditions the gradients of the relaxation $\loss^g$ (loss using a non-exact inverse) equal those of $\loss^{f^{-1}}$. Finally and most importantly, we show that solutions to $\loss^{f^{-1}}$ exactly learn the data distribution. In addition, every critical point of $\loss^{f^{-1}}$ is a critical point of $\loss^g$, meaning that optimizing $\loss^g$ is equivalent, except for some additional critical points which we argue do not matter in practice. Please refer to \cref{app:theory} for detailed derivations and proofs of the results in this section.

\subsection{Loss Derivation}
\label{sec:loss-derivation}

In addition to the intuitive development given in the previous sections, $\loss^{f^{-1}}$ (\cref{eq:loss-f-inv-and-g}) can be rigorously derived as a bound on the KL divergence between a noisy version of the data and a noisy version of the model, known as a spread divergence \citep{zhang2020spread}. The bound is a type of evidence lower bound (ELBO) as employed in VAEs \citep{kingma2013auto}.
\begin{theorem}
Let $f_\theta$ and $g_\phi$ be $C^1$ and let $f_\theta$ be globally invertible. Define the spread KL divergence $\tilde{\mathcal{D}}_\text{KL}$ as the KL divergence between distributions convolved with isotropic Gaussian noise of variance $\sigma^2$. Let $\beta = 1/2\sigma^2$. Then there exists a function $D$ of $\theta$ and $\phi$ such that
\begin{equation}
    \nabla_\theta \loss^{f^{-1}} = \nabla_\theta D \quad \text{and} \quad \nabla_\phi \loss^{f^{-1}} = \nabla_\phi D
\end{equation}
and
\begin{equation}
     D \geq \tilde{\mathcal{D}}_\text{KL}( q(x) \parallel p_\phi(x) )
\end{equation}
As a result, minimizing $\loss^{f^{-1}}$ is equivalent to minimizing an upper bound on $\tilde{\mathcal{D}}_\text{KL}( q(x) \parallel p_\phi(x) )$. 

\end{theorem}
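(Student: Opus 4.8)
The plan is to exhibit $D$ as a variational upper bound on the spread divergence, following the evidence-lower-bound arguments for spread divergences and injective flows in \cite{zhang2020spread, sorrenson2024lifting} but specialized to the full-dimensional case where $f_\theta$ is a bijection. Write $\tilde q = q * \N(0,\sigma^2 I)$ and $\tilde p_\phi = p_\phi * \N(0,\sigma^2 I)$, so that $\tilde{\mathcal{D}}_\text{KL}(q\|p_\phi) = \KL(\tilde q\|\tilde p_\phi)$; this is well defined even though $g_\phi$ need not be invertible, since convolution with a Gaussian always yields a density. The first key point is that $\tilde p_\phi$ is the marginal likelihood of the latent-variable model $z\sim p(z)$, $y\mid z\sim\N(g_\phi(z),\sigma^2 I)$, so for \emph{any} conditional density $r(z\mid y)$ the chain rule for KL gives $\KL\!\big(\tilde q(y)\,r(z\mid y)\,\big\|\,p(z)\,\N(y;g_\phi(z),\sigma^2 I)\big) = \KL(\tilde q\|\tilde p_\phi) + \E_{y\sim\tilde q}\KL(r(\cdot\mid y)\|p_\phi(\cdot\mid y)) \ \ge\ \KL(\tilde q\|\tilde p_\phi)$. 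I would therefore \emph{define} $D$ to be this left-hand side for a suitably chosen $r$; then $D\ge\tilde{\mathcal{D}}_\text{KL}(q\|p_\phi)$ holds for free, and the whole theorem reduces to checking that this $D$ coincides with $\loss^{f^{-1}}$ up to an additive constant (so that the two gradients agree).

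For the variational posterior I would take the push-forward through $f_\theta$ of an isotropic Gaussian around $y$, or equivalently its Laplace form $\N(z; f_\theta(y), \sigma^2 J_\theta J_\theta^\top)$: this is exactly where global invertibility of $f_\theta$ enters, and it is what generates the change-of-variables term, since $\log r(z\mid y)$ then carries the summand $-\log|J_\theta|$. Expanding the left-hand KL, I expect the following collapse: the Gaussian normalizers $\tfrac D2\log(2\pi\sigma^2)$ from the decoder likelihood and from $r$ combine into a constant; the quadratic self-terms integrate to dimension-only constants; the prior term $-\log p(z)$ becomes $-\log p(f_\theta(x))$; the decoder-likelihood quadratic $\tfrac1{2\sigma^2}\|y-g_\phi(z)\|^2$ becomes, after completing the square in the Gaussian integrals, the reconstruction term with coefficient $\tfrac1{4\sigma^2}$, i.e.\ exactly $\beta\loss_\text{R}$ with $\beta=1/2\sigma^2$; and $-\tilde H(\tilde q)$ together with the leftover pieces forms a $\theta,\phi$-independent constant. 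What remains is that $D$ equals $\E_x[-\log p(f_\theta(x)) - \log|J_\theta(x)|] + \beta\loss_\text{R}$ up to a constant, and this is precisely the exact objective whose gradient the estimator $\loss^{f^{-1}}$ reproduces (\cref{sec:gradient-trick}); hence $\nabla_\theta D = \nabla_\theta\loss^{f^{-1}}$, $\nabla_\phi D = \nabla_\phi\loss^{f^{-1}}$, and minimizing $\loss^{f^{-1}}$ minimizes the upper bound $D$ on the spread divergence.

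The main obstacle is exactly this term collapse: one must choose the covariance of the variational Gaussian so that the change-of-variables factor is \emph{exactly} $-\log|J_\theta(x)|$ at the right argument, and so that every remaining Jacobian-dependent piece is a genuine constant rather than a stray regularizer — this is what forces the reconstruction weight to be $\tfrac1{4\sigma^2}$ and hence $\beta = 1/2\sigma^2$, i.e.\ the hypothesis of the theorem. If a push-forward-of-Gaussian posterior leaves small remainders (for instance from evaluating $J_\theta$ at a noised argument), these need to be reorganized into the additive constant or shown to have vanishing gradient; the local-Gaussian form of $r$ is the natural choice that keeps every evaluation at $x$ and closes this gap. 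The stated hypotheses are used exactly where expected: $f_\theta$ globally invertible so that the push-forward posterior is a valid density and change of variables applies, and $f_\theta,g_\phi$ of class $C^1$ so that the Jacobians exist and one may differentiate through the integral.
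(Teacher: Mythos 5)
Your overall architecture is the same as the paper's: define $D$ as a joint (chain-rule/ELBO) KL divergence over $(\tilde x,z)$ for the noisy decoder model $p(z)\,\N(\tilde x;g_\phi(z),\sigma^2 I)$, get $D\geq\tilde{\mathcal{D}}_\text{KL}(q\|p_\phi)$ for free, and then show $D$ equals the exact objective $\E_{q(x)}[-\log p(f_\theta(x))-\log|J_\theta(x)|+\beta\|x-g_\phi(f_\theta(x))\|^2]$ up to a parameter-independent constant, so that its gradients coincide with those of $\loss^{f^{-1}}$. The gap is in the one step that does all the work: your choice of the variational posterior $r(z\mid y)$. You propose either the push-forward through $f_\theta$ of $\N(x;y,\sigma^2 I)$ or its Laplace surrogate $\N\big(z;f_\theta(y),\sigma^2 J_\theta J_\theta^\top\big)$ (these two are not equivalent for nonlinear $f_\theta$), and neither makes the terms collapse exactly. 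With the Laplace choice, the entropy term indeed produces $-\log|J_\theta|$, but evaluated at noised points $y\sim q*\N(0,\sigma^2 I)$ rather than at $x\sim q$; the prior cross-entropy $\E_{z\sim r}[\tfrac12\|z\|^2]$ contributes a non-constant stray term $\tfrac{\sigma^2}{2}\lVert J_\theta(y)\rVert_F^2$; and $\E_{z\sim r}\big[\tfrac1{2\sigma^2}\|y-g_\phi(z)\|^2\big]$ has no closed form and, to second order, adds Jacobian-dependent terms of the type $\tfrac12\tr(J_\phi J_\theta J_\theta^\top J_\phi^\top)$. With the exact push-forward-of-Gaussian choice, the Jacobian and prior terms get evaluated on a doubly-smoothed distribution $q*\N(0,2\sigma^2 I)$ and the cross term in the decoder quadratic does not vanish, because the noise is not independent of the point at which the networks are evaluated. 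None of these remainders is a constant or gradient-free, so the asserted exact gradient equality fails for finite $\sigma$ (at best you get an $O(\sigma^2)$-approximate statement, which is weaker than the theorem). Your hedge that such remainders "need to be reorganized into the additive constant" is precisely the unresolved obstacle. Also, the claimed reconstruction coefficient $\tfrac1{4\sigma^2}$ from "completing the square" is unsubstantiated; the decoder likelihood yields $\tfrac1{2\sigma^2}\E\|x-g_\phi(f_\theta(x))\|^2$, matching the form of $\loss^{f^{-1}}$ used in the appendix.

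The fix is the paper's specific posterior: take $r(z\mid\tilde x)=q_\theta(z\mid\tilde x)$, the exact conditional law of $z=f_\theta(x)$ given $\tilde x=x+\sigma\epsilon$ under $x\sim q$, i.e.\ the push-forward through $f_\theta$ of the true denoising posterior $q(x\mid\tilde x)\propto q(x)\N(\tilde x;x,\sigma^2 I)$ --- not of an isotropic Gaussian around $\tilde x$. Then the joint $q(\tilde x)\,q_\theta(z\mid\tilde x)$ is exactly the law of $(x+\sigma\epsilon,\,f_\theta(x))$, so one never integrates Gaussians against Jacobians: a change of variables from $(\tilde x,z)$ to $(x,\epsilon)$ puts every network evaluation at clean data $x\sim q$, the identity $h(q(z\mid\tilde x))=h(q(x\mid\tilde x))+\E_{q(x\mid\tilde x)}[\log|J_\theta(x)|]$ isolates $\E_{q(x)}[\log|J_\theta(x)|]$ with $h(q(x\mid\tilde x))$ parameter-independent, and the cross term $\tfrac1\sigma\E[\epsilon^\top(x-g_\phi(f_\theta(x)))]$ vanishes exactly because $\epsilon$ is independent of $x$ with zero mean. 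Global invertibility of $f_\theta$ is used exactly there (to apply the entropy change-of-variables identity), not to make a Gaussian push-forward well defined.
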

The fact that we optimize a bound on a spread KL divergence is beneficial in cases where $q(x)$ is degenerate, for example when $q(x)$ is an empirical data distribution (essentially a mixture of Dirac delta distributions). (Non-spread) KL divergences with $q(x)$ in this case will be almost always infinite. In addition, by taking $\sigma$ very small (and hence using large $\beta$), the difference between the standard and spread KL divergence is so small as to be negligible in practice.

Since the above derivation resembles an ELBO, we can ask whether the FFF can be interpreted as a VAE. In \cref{app:loss-derivation} we provide an argument that it can, but one with a very flexible posterior distribution, in contrast to the simple distributions (such as Gaussian) typically used in VAEs posteriors. As such it does not suffer from typical VAE failure modes, such as poor reconstructions and over-regularization.

Note that the above theorem states a result in terms of the decoder distribution $p_\phi(x)$, not the encoder distribution $p_\theta(x)$ which was used to motivate the loss function. While this may seem counterproductive at first, it is in fact more useful to optimize for $p_\phi(x)$ matching the data distribution than for $p_\theta(x)$ to match the data distribution, since $p_\phi(x)$ is the model we use to generate data from. In any case, it is simple to demonstrate that $D' \geq \KL(q(x) \parallel p_\theta(x))$ where $D'$ has the same gradients as $\loss^{f^{-1}}$ (see \cref{app:loss-derivation}), so both encoder and decoder models will become more similar to the data distribution with increasing optimization.

\subsection{Error Bound}
\label{sec:error-bound}
The accuracy of the estimator for the gradient of the log-determinant depends on how close $g_\phi$ is to being an inverse of $f_\theta$. In particular, we can bound the estimator's error by a measure of how close the product of the Jacobians is to the identity matrix. This is captured in the following result.
\begin{theorem}
Let $f_\theta$ and $g_\phi$ be $C^1$, let $J_\theta$ be the Jacobian of $f_\theta$ at $x$ and let $J_\phi$ be the Jacobian of $g_\phi$ at $f_\theta(x)$. Suppose that $f_\theta$ is locally invertible at $x$, meaning $J_\theta(x)$ is an invertible matrix. Let $\lVert \cdot \rVert_F$ be the Frobenius norm of a matrix. Then the absolute difference between $\nabla_{\theta_i} \log |J_\theta(x)|$ and the trace-based approximation is bounded:
\begin{align}
    & \hspace{0.45cm} \left| \tr((\nabla_{\theta_i} J_\theta) J_\phi) - \nabla_{\theta_i} \log |J_\theta| \right| \\ & \leq \lVert (\nabla_{\theta_i} J_\theta) J_\theta^{-1} \rVert_F \lVert J_\theta J_\phi  - I \rVert_F
\end{align}
\end{theorem}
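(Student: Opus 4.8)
The plan is to reduce the left-hand side to a single trace of a matrix product and then apply the Cauchy--Schwarz inequality for the Frobenius inner product. First I would invoke Jacobi's formula, which is exactly \cref{eq:gradient-trick}: since $f_\theta$ is assumed locally invertible at $x$, the matrix $J_\theta = J_\theta(x)$ is invertible and $\nabla_{\theta_i}\log|J_\theta| = \tr\bigl((\nabla_{\theta_i}J_\theta)J_\theta^{-1}\bigr)$. Substituting this into the quantity to be bounded and using linearity of the trace, the two terms combine into
\begin{equation}
    \tr\bigl((\nabla_{\theta_i}J_\theta)J_\phi\bigr) - \nabla_{\theta_i}\log|J_\theta|
    = \tr\bigl((\nabla_{\theta_i}J_\theta)(J_\phi - J_\theta^{-1})\bigr).
\end{equation}

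Next I would rewrite the ``error matrix'' $J_\phi - J_\theta^{-1}$ so that the hypothesis controlling how close $g_\phi$ is to an inverse of $f_\theta$ appears explicitly. Factoring $J_\theta^{-1}$ out on the left gives $J_\phi - J_\theta^{-1} = J_\theta^{-1}(J_\theta J_\phi - I)$, hence
\begin{equation}
    \tr\bigl((\nabla_{\theta_i}J_\theta)(J_\phi - J_\theta^{-1})\bigr)
    = \tr\bigl((\nabla_{\theta_i}J_\theta)J_\theta^{-1}\,(J_\theta J_\phi - I)\bigr).
\end{equation}
Finally I would apply the Cauchy--Schwarz inequality with respect to the Frobenius inner product $\langle A, B\rangle_F = \tr(A^T B)$: for conformable matrices $X, Y$ one has $|\tr(XY)| = |\langle X^T, Y\rangle_F| \le \lVert X^T\rVert_F \lVert Y\rVert_F = \lVert X\rVert_F \lVert Y\rVert_F$, using $\lVert X^T\rVert_F = \lVert X\rVert_F$. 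Taking $X = (\nabla_{\theta_i}J_\theta)J_\theta^{-1}$ and $Y = J_\theta J_\phi - I$, and taking absolute values, yields precisely the stated bound.

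This argument is essentially routine; the only mildly delicate points are (a) keeping track of the transpose when invoking Cauchy--Schwarz for traces, which is harmless since the Frobenius norm is transpose-invariant, and (b) choosing the correct factorization of $J_\phi - J_\theta^{-1}$: pulling $J_\theta^{-1}$ out on the \emph{left} is what makes the two factors appear exactly as $\lVert(\nabla_{\theta_i}J_\theta)J_\theta^{-1}\rVert_F$ and $\lVert J_\theta J_\phi - I\rVert_F$ rather than some reshuffled product. I expect no real obstacle here; one could instead bound the trace crudely via submultiplicativity of $\lVert\cdot\rVert_F$, but that gives a strictly weaker estimate, so the trace/Cauchy--Schwarz route is the one to take.
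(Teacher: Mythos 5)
Your proposal is correct and matches the paper's proof essentially step for step: apply Jacobi's formula to write $\nabla_{\theta_i}\log|J_\theta|$ as a trace, combine the two traces into $\tr\bigl((\nabla_{\theta_i}J_\theta)(J_\phi - J_\theta^{-1})\bigr)$, factor this as $\tr\bigl((\nabla_{\theta_i}J_\theta)J_\theta^{-1}(J_\theta J_\phi - I)\bigr)$, and finish with Cauchy--Schwarz for the Frobenius inner product. The only cosmetic difference is that the paper phrases the final step as Cauchy--Schwarz plus cyclicity of the trace, whereas you handle the transpose bookkeeping explicitly via $\lVert X^T\rVert_F = \lVert X\rVert_F$; the content is identical.
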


The Jacobian deviation, namely $\lVert J_\theta J_\phi - I \rVert_F$, could be minimized by adding such a term to the loss as a regularizer. We find in practice that the reconstruction loss alone is sufficient to minimize this quantity and that the two are correlated in practice. While it could be possible in principle for a dimension-preserving pair of encoder and decoder to have a low reconstruction loss while the Jacobians of encoder and decoder are not well matched, we don't observe this in practice. Such a function would have to have a very large second derivative, which is implicitly discouraged in typical neural network optimization \citep{rahaman2019spectral}. 

In \cref{app:error-bound} we prove an additional result which quantifies the difference between the gradients of $\loss^{f^{-1}}$ and $\loss^g$, showing that it is bounded by $\E_{x} \left[ \lVert J_\theta J_\phi  - I \rVert_F^2 \right]$.

\subsection{Critical Points}
\label{sec:critical-points}

\begin{figure}
    \centering
    \includegraphics[width=1\linewidth]{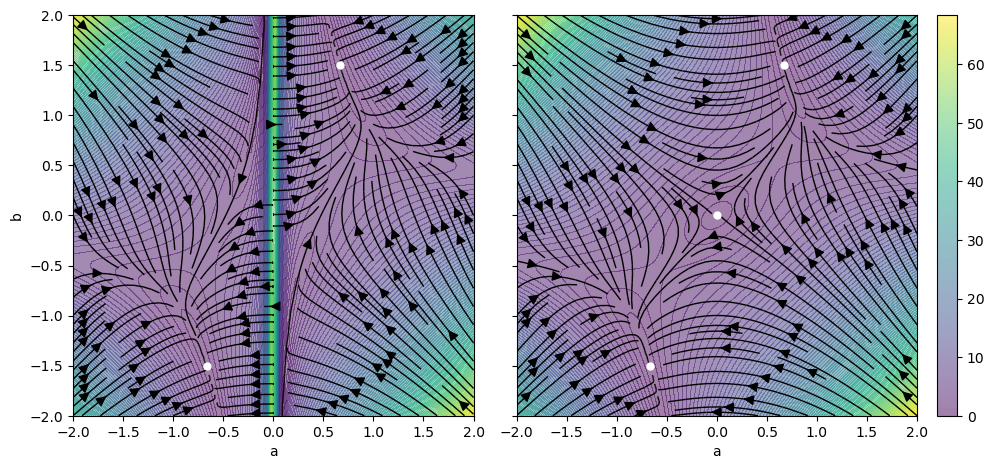}
    \caption{Gradient landscape of $\loss^{f^{-1}}$ (\textit{left}) and $\loss^g$ (\textit{right}) for a linear 1D model with $f(x) = ax$, $g(z) = bz$, $q(x) = \N(0, 1.5^2)$ and $\beta = 1$. The flow lines show the direction and the contours show the magnitude of the gradient. White dots are critical points. $\loss^g$ has the same minima $(\pm 2/3, \pm 1.5)$ as $\loss^{f^{-1}}$, with an additional critical point at $a=b=0$. This is a saddle, so we will not converge to it in practice. Therefore optimizing $\loss^g$ results in the same solutions as $\loss^{f^{-1}}$.}
    \label{fig:critical-points}
\end{figure}

The following theorem states our main result: that optimizing $\loss^g$ (\cref{eq:loss-f-inv-and-g}) is almost equivalent to optimizing $\loss^{f^{-1}}$, and that the solutions to $\loss^{f^{-1}}$ are maximum likelihood solutions where $p_\theta(x) = q(x)$. Note that this a result on the functional level: if we say $f$ is a critical point of $\loss^{f^{-1}}$, we mean that adding any infinitesimally small deviation $\delta f$ to $f$ does not change $\loss^{f^{-1}}$. These optima may not be within the set of functions reachable under gradient descent with our chosen network architecture, and the particular neural network implementation may introduce local minima which are not captured in the theorem.

\newpage

\begin{theorem}
\label{thm:critical-points}
Let $f_\theta$ and $g_\phi$ be $C^1$ and let $f_\theta$ be globally invertible. Suppose $q(x)$ is finite and has support everywhere. Then the critical points (on the functional level) of $\loss^{f^{-1}}$ (for any $\beta > 0$) are such that 
\begin{enumerate}
    \item $g_\phi(z) = f_\theta^{-1}(z)$ for all $z$, and
    \item $p_\theta(x) = q(x)$ for all $x$, and
    \item All critical points are global minima
\end{enumerate}

Furthermore, every minimum of $\loss^{f^{-1}}$ is a critical point of $\loss^g$. If the reconstruction loss is minimal, $\loss^g$ has no additional critical points.
\end{theorem}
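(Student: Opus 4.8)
The plan is to first replace $\loss^{f^{-1}}$ by an equivalent objective with honest gradients, then characterize its critical points by varying $\phi$ and $\theta$ in turn, and finally transfer the conclusion to $\loss^g$ using that the two losses agree to first order wherever $g_\phi=f_\theta^{-1}$. For the reduction: by the gradient-trick theorem and $\E[vv^\top]=I$, the stop-gradient term of $\loss_\text{ML}^{f^{-1}}$ contributes exactly $-\nabla_\theta\log|J_\theta(x)|$ to the gradient, so $\loss^{f^{-1}}$ has the same gradients in $\theta$ and in $\phi$ — hence the same critical points — as
\[
\tilde\loss \;=\; \E_{x}\big[-\log p(f_\theta(x)) - \log|J_\theta(x)|\big] + \beta\,\loss_\text{R} \;=\; \KL\big(q(x)\,\|\,p_\theta(x)\big) + H(q) + \beta\,\loss_\text{R},
\]
with $H(q)=-\E_x[\log q]$ a finite constant (using that $q$ is finite), $p_\theta=(f_\theta^{-1})_\ast p$, and $\loss_\text{R}=\tfrac12\E_x\|x-g_\phi(f_\theta(x))\|^2$. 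The first summand depends only on $\theta$.

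\paragraph{Claims 1--3.}
Since $\loss_\text{ML}^{f^{-1}}$ is independent of $\phi$ and $\beta>0$, a critical point satisfies $\nabla_\phi\loss_\text{R}=0$. The functional derivative is $\tfrac{\delta}{\delta g_\phi(z)}\loss_\text{R}=-q(f_\theta^{-1}(z))\,|J_{f_\theta^{-1}}(z)|\,(f_\theta^{-1}(z)-g_\phi(z))$, and since $q>0$ everywhere and $f_\theta$ is a $C^1$ diffeomorphism the prefactor is strictly positive, forcing $g_\phi=f_\theta^{-1}$ everywhere (claim 1), so $\loss_\text{R}=0$. With $g_\phi=f_\theta^{-1}$ the integrand of $\loss_\text{R}$ sits at its minimum, so $\nabla_\theta\loss_\text{R}=0$ as well, and the critical-point condition reduces to $\nabla_\theta\KL(q\|p_\theta)=0$. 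Now use diffeomorphism-invariance, $\KL(q\|p_\theta)=\KL((f_\theta)_\ast q\,\|\,p)$, and vary $f_\theta\mapsto(\mathrm{id}+\epsilon\xi)\circ f_\theta$ for a smooth compactly supported vector field $\xi$ on latent space: writing $r=(f_\theta)_\ast q$, the pushforward satisfies $\partial_\epsilon r_\epsilon|_0=-\nabla\cdot(r\xi)$ by the continuity equation, so $\partial_\epsilon\KL(r_\epsilon\|p)|_0=\int r\,\xi\cdot\nabla\log(r/p)$ after integrating by parts (the mass term drops). Vanishing for all $\xi$, together with $r>0$, forces $\nabla\log(r/p)\equiv0$ on the connected set $\R^D$, hence $r=p$ after normalizing; then $p_\theta=(f_\theta^{-1})_\ast r=(f_\theta^{-1})_\ast(f_\theta)_\ast q=q$ (claim 2). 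At such a point $\KL=0$ and $\loss_\text{R}=0$, so $\tilde\loss=H(q)$, which simultaneously realizes the infimum of each summand and is therefore a global minimum (claim 3).

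\paragraph{The ``furthermore''.}
The key observation is that at any $(\theta,\phi)$ with $g_\phi=f_\theta^{-1}$ one has $J_\phi=J_\theta^{-1}$, so the stop-gradient-modified gradients of $\loss^g$ and $\loss^{f^{-1}}$ coincide: the $\theta$-gradients of the ML parts agree because $\tr((\nabla_\theta J_\theta)J_\phi)=\tr((\nabla_\theta J_\theta)J_\theta^{-1})=\nabla_\theta\log|J_\theta|$; their $\phi$-gradients both vanish ($\loss_\text{ML}^{f^{-1}}$ ignores $\phi$, and in $\loss_\text{ML}^g$ the $\phi$-dependence is inside \texttt{SG}); and $\nabla\loss_\text{R}=0$ there. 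Hence $\nabla\loss^g=\nabla\loss^{f^{-1}}$ on $\{g_\phi=f_\theta^{-1}\}$. Every critical point of $\loss^{f^{-1}}$ lies in this set (claim 1), so it is also a critical point of $\loss^g$, which is the first sentence. Conversely, ``reconstruction loss minimal'' means $\loss_\text{R}=0$, i.e. $g_\phi=f_\theta^{-1}$; by the same identity, a critical point of $\loss^g$ with minimal reconstruction is a critical point of $\loss^{f^{-1}}$, hence one of the global minima with $p_\theta=q$ — so $\loss^g$ has no additional critical points among these.

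\paragraph{Main obstacle.}
I expect the crux to be the variational argument in claims 2--3: the clean way is to perturb $f_\theta$ by flows of latent-space vector fields rather than by additive perturbations of $f_\theta$, so that the $\log|J_\theta|$ term is absorbed automatically through the continuity equation; one must then justify the integration by parts and the divergence-theorem step under the $C^1$/finiteness hypotheses (restricting to compactly supported smooth $\xi$), and argue that a positive density with $\nabla\log(r/p)\equiv0$ on all of $\R^D$ must equal $p$. The remaining steps are short calculus-of-variations computations and careful bookkeeping of the stop-gradient operator.
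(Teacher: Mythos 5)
Your proposal is correct, and while claims 1, 3 and the ``furthermore'' follow essentially the paper's reasoning (functional derivative in $g$, the entropy/zero lower bounds, and the observation that the functional gradients of $\loss^g$ and $\loss^{f^{-1}}$ coincide wherever $g_\phi=f_\theta^{-1}$), your argument for claim 2 takes a genuinely different route. The paper writes $\loss^{f^{-1}}=\int\lambda(x,f,f',g)\,\mathrm{d}x$ and works out the Euler--Lagrange equation in $f$ explicitly, using Jacobi's formula and the symmetry of the Hessian to massage $\partial_j\!\left((f'(x)^{-1})_{ji}\right)f'_{ik}(x)$ into $-\partial_k\log|f'(x)|$, and then integrates the resulting identity to obtain $\log q(x)=-\tfrac12\lVert f(x)\rVert^2+\log|f'(x)|+\text{const}$, i.e.\ $p_\theta=q$. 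You instead (i) note that at $g=f^{-1}$ the reconstruction term sits at its global minimum, so its first variation in $f$ vanishes and criticality reduces to criticality of $\KL(q\|p_\theta)$ alone, and (ii) exploit diffeomorphism invariance, $\KL(q\|p_\theta)=\KL((f_\theta)_\ast q\,\|\,p)$, perturbing by $(\mathrm{id}+\epsilon\xi)\circ f_\theta$ and using the continuity equation plus integration by parts to force $\nabla\log(r/p)\equiv 0$, hence $r=p$ and $p_\theta=q$. This buys a cleaner, more conceptual derivation that avoids the index-heavy Euler--Lagrange/Jacobi computation, and restricting to variations of the special form $\delta f=\xi\circ f$ is legitimate since you only need necessity of criticality; the paper's computation, in exchange, is more elementary (no transport identity) and delivers the pointwise identity for $\log q$ directly. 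Both arguments lean on the same implicit regularity: differentiability of $q$ (the paper differentiates $\log q$, you need $r=f_\ast q$ differentiable for $\nabla\cdot(r\xi)$ and the integration by parts), surjectivity of the globally invertible $f_\theta$ onto $\R^D$ so that $r>0$ on a connected domain and $p_\theta$ is normalized, and finiteness of $h(q)$ for the global-minimum statement; it would be worth stating these explicitly, but they are not gaps relative to the paper's own proof.
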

Note that $\loss^g$ may have additional critical points if the reconstruction loss is not minimal, meaning that $f_\theta$ and $g_\phi$ are not globally invertible. An example is when both $f_\theta$ and $g_\phi$ are the zero function and $q(x)$ has zero mean. We can avoid such solutions by ensuring that $\beta$ is large enough to not tolerate a high reconstruction loss. In \cref{app:reconstruction-weight} we give guidelines on how to choose $\beta$ in practice.

\Cref{fig:critical-points} provides an illuminating example. Here the data and latent space are 1-dimensional and $f$ and $g$ are simple linear functions of a single parameter each. As such we can visualize the gradient landscape in a 2D plot. We see that the additional critical point at the origin is a saddle: there are both converging and diverging gradients. In stochastic gradient descent, it is not plausible that we converge to a saddle since the set of points which converge to it deterministically have measure zero in the parameter space. Hence in this example $\loss^g$ will converge to the same solutions as $\loss^{f^{-1}}$. In addition, it has a smoother gradient landscape (no diverging gradient at $a=0$). While this might not be important in this simple example, in higher dimensions where the Jacobians of adjacent regions could be inconsistent (if the eigenvalues have different signs), it is useful to be able to cross regions where the Jacobian is singular without having to overcome an excessive gradient barrier.

\section{EXPERIMENTS}
\label{sec:experiments}

In this section, we demonstrate the practical capabilities of free-form flows (FFF). We mainly compare the performance against normalizing flows based on architectures which are invertible by construction. First, on an inverse problem benchmark, we show that using free-form architectures offers competitive performance to recent spline-based and ODE-based normalizing flows. This is achieved despite minimal tuning of hyperparameters, demonstrating that FFFs are easy to adapt to a new task. Second, on two molecule generation benchmarks, we demonstrate that specialized networks can now be used in a normalizing flow. In particular, we employ the equivariant graph neural networks $E(n)$-GNN \citep{satorras2021equivariant}. This $E(n)$-FFF outperforms ODE-based equivariant normalizing flows in terms of likelihood, and generates stable molecules significantly faster than a diffusion model.

\subsection{Simulation-Based Inference}

One popular application of generative models is in solving inverse problems. Here, the goal is to estimate hidden parameters from an observation. As inverse problems are typically ambiguous, a probability distribution represented by a generative model is a suitable solution. From a Bayesian perspective, this probability distribution is the posterior of the parameters given the observation. We learn this posterior via a conditional generative model.

In particular, we focus on simulation based inference (SBI, \cite{radev2022bayesflow, radev2021outbreakflow, bieringer2021measuring}), where we want to predict the parameters of a simulation. The training data is pairs of parameters and outputs generated from the simulation.

We train FFF models on the benchmark proposed in \citep{lueckmann2021benchmarking}, which is comprised of ten inverse problems of varying difficulty at three different simulation budgets (i.e.~training-set sizes) each. The models are evaluated via a classifier 2-sample test (C2ST) \citep{lopez2016revisiting, friedman2003multivariate}, where a classifier is trained to discern samples from the trained generative model and the true parameter posterior. The model performance is then reported as the classifier accuracy, where 0.5 demonstrates a distribution indistinguishable from the true posterior. We average this accuracy over ten different observations. In \cref{fig:sbi-c2st}, we report the C2ST of our model and compare it against the baseline based on neural spline flows \citep{durkan2019neural} and flow matching for SBI \citep{wildberger2023flow}.
Our method performs competitively, especially providing an improvement over existing methods in the regime of low simulation budgets. Regarding tuning of hyperparameters, we find that a simple fully-connected architecture with skip connections works across datasets with minor modifications to increase capacity for the larger datasets. We identify the reconstruction weight $\beta$ large enough such that training becomes stable. We give all dataset and more training details in \cref{app:sbi}.

\begin{figure*}
    \centering
    \includegraphics[width=\textwidth]{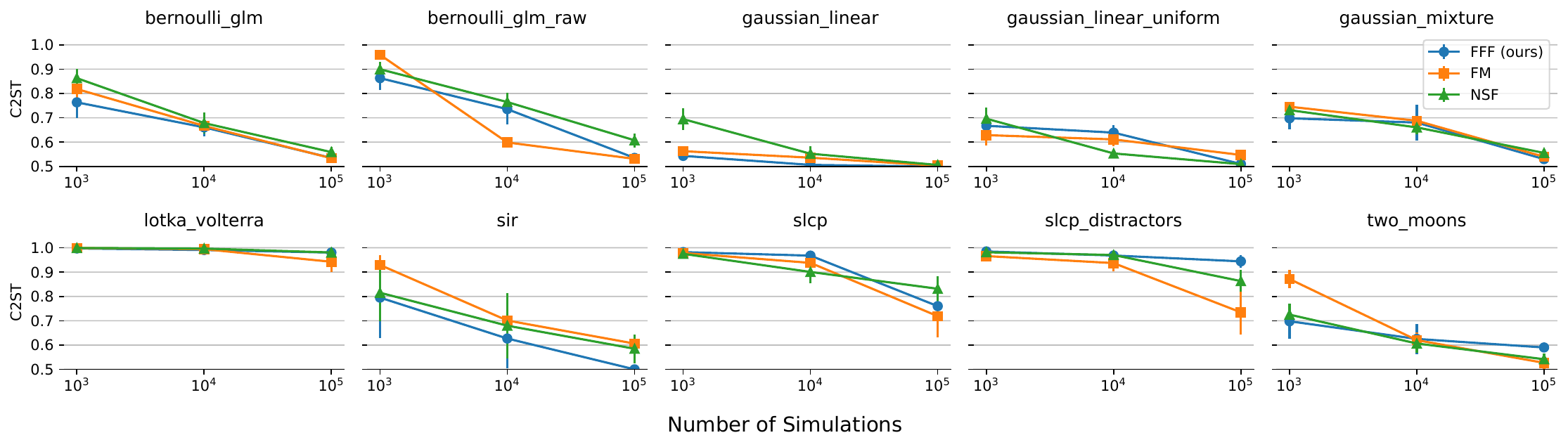}
    \caption{C2ST accuracy on the SBI benchmark datasets. We compare our method (FFF) against flow matching (FM) \cite{wildberger2023flow} and the neural spline flow (NSF) baseline in the benchmark dataset \cite{lueckmann2021benchmarking}. The accuracy is averaged over ten different observations, with error bars indicating the standard deviation. Our performance is comparable to the competitors across all datasets, with no model being universally better or worse.}
    \label{fig:sbi-c2st}
\end{figure*}

\subsection{Molecule Generation}

Free-form normalizing flows (FFF) do not make any assumptions about the underlying networks $f_\theta$ and $g_\phi$, except that they preserve dimension. We can leverage this flexibility for tasks where explicit constraints \textit{should} be built into the architecture, as opposed to constraints that originate from the need for tractable optimization (such as coupling blocks).

As a showcase, we apply FFF to molecule generation. Here, the task is to learn the joint distribution of a number of atoms $x_1, \dots, x_N \in \R^n$. Each prediction of the generative model should yield a physically valid position for each atom: $x = (x_1, \dots, x_N) \in \R^{N \times n}$.

The physical system of atoms in space have an important symmetry: if a molecule is shifted or rotated in space, its properties do not change. This means that a generative model for molecules should yield the same probability regardless of orientation and translation:
\begin{equation}
    p_\phi(Q x + t) = p_\phi(x).
    \label{eq:euclidean-invariance}
\end{equation}
Here, the rotation $Q \in \R^{n \times n}$ acts on $x$ by rotating or reflecting each atom $x_i \in \R^n$ about the origin, and $t \in \R^n$ applies the same translation to each atom.
Formally, $(Q, t)$ are realizations of the Euclidean group $E(n)$. The above \cref{eq:euclidean-invariance} means that the distribution $p_\phi(x)$ is invariant under the Euclidean group $E(n)$.

\cite{kohler2020equivariant,toth2020hamiltonian} showed that if the latent distribution $p(z)$ is \textit{invariant} under a group $G$, and a generative model $g_\phi(z)$ is \textit{equivariant} to $G$, then the resulting distribution is also invariant to $G$. Equivariance means that applying any group action to the input (e.g.~rotation and translation) and then applying $g_\phi$ should give the same result as first applying $g_\phi$ and then applying the group. For example, for the Euclidean group:
\begin{equation}
    Q g_\phi(z) + t = g_\phi(Qz + t).
    \label{eq:euclidean-equivariance}
\end{equation}

This implies that we can learn a distribution invariant to the Euclidean group by construction by making normalizing flows equivariant to the Euclidean group as in \cref{eq:euclidean-equivariance}. Previous work has demonstrated that this inductive bias is more effective than data augmentation, where random rotations and translations are applied to each data point at train time \citep{kohler2020equivariant,hoogeboom2022equivariant}.

We therefore choose an $E(n)$ equivariant network as the networks $f_\theta(x)$ and $g_\phi(z)$ in our FFF. We employ the $E(n)$-GNN proposed by \cite{satorras2021equivariant}. We call this model the $E(n)$-free-form flow ($E(n)$-FFF). We give the implementation details in \cref{app:molecule-generation}.

The $E(n)$-GNN has also been the backbone for previous normalizing flows on molecules. However, to the best of our knowledge, all realizations of such architectures have been based on neural ODEs, where the flow is parameterized as a differential equation $\frac{\mathrm{d} x}{\mathrm{d} t} = f_\theta(x(t), t)$. While training, one can avoid solving the ODE by using the rectified flow or flow matching objective \citep{liu2023learning, lipman2023flow, albergo2023building}. However, they still have the disadvantage that they require integrating the ODE for sampling. Our model, in contrast, only calls $f_\phi(z)$ once for sampling.

\begin{table}
    \centering
    \resizebox{\linewidth}{!}{
    \begin{tabular}{l|cc} 
        & \multirow{2}{*}{NLL ($\downarrow$)} & Sampling \\
        & & time ($\downarrow$) \\
        \hline
        & \multicolumn{2}{c}{DW4} \\ \hline
        $E(n)$-NF \citep{satorras2021equivariantflow} & 1.72 $\pm$ 0.01 &\textbf{0.024} ms \\  %
        OT-FM \citep{klein2023equivariant}   & \textbf{1.70} $\pm$ 0.02  &0.034 ms\\  %
        E-OT-FM \citep{klein2023equivariant} & \textbf{1.68} $\pm$ 0.01  &0.033 ms \\ %
        $E(n)$-FFF (ours) & \textbf{1.68} $\pm$ 0.01&\textbf{0.026} ms \\ %
        \hline
        
        & \multicolumn{2}{c}{LJ13} \\ \hline
        $E(n)$-NF & -16.28 $\pm$ 0.04 &0.27 ms  \\ %
        OT-FM & -16.54 $\pm$ 0.03 &0.77 ms  \\ %
        E-OT-FM & -16.70 $\pm$ 0.12 &0.72 ms \\ %
        $E(n)$-FFF (ours) & \textbf{-17.09} $\pm$ 0.16&\textbf{0.11} ms \\ %
        \hline
        
        & \multicolumn{2}{c}{LJ55} \\ \hline
        OT-FM & -88.45 $\pm$ 0.04   & 40 ms  \\ %
        E-OT-FM & \textbf{-89.27} $\pm$ 0.04 &40 ms  \\ %
        $E(n)$-FFF (ours) & -88.72 $\pm$ 0.16 &\textbf{2.1} ms \\ %
    \end{tabular}
    }
    \caption{%
    Equivariant free-form flows ($E(n)$-FFF) sample significantly faster than previous models, and achieve comparable or better negative log-likelihood (NLL, lower is better). More details in \cref{tab:bg-sampling-time}.}
    \label{tab:boltzmann-overview}
\end{table}

\paragraph{Boltzmann Generator} We test our $E(n)$-FFFs in learning a Boltzmann distribution:
\begin{equation}
    q(x) \propto e^{-\beta u(x)},
\end{equation}
where $u(x) \in \R$ is an energy function that takes the positions of atoms $x = (x_1, \dots, x_N)$ as an input.
A generative model $p_\phi(x)$ that approximates $q(x)$ can be used as a Boltzmann generator \citep{noe2019boltzmann}. The idea of the Boltzmann generator is that having access to $u(x)$ allows re-weighting samples from the generator after training even if $p_\phi(x)$ is different from $q(x)$. This is necessary in order to evaluate samples from $q(x)$ in a downstream task: Re-weighting samples allows computing expectation values $\E_{x \sim q(x)}[O(x)] = \E_{x \sim p_\phi(x)}[\frac{q(x)}{p_\phi(x)} O(x)]$ from samples of the generative model $p_\phi(x)$ if $p_\phi(x)$ and $q(x)$ have the same support. 

We evaluate the performance of free-form flows (FFF) as a Boltzmann generator on the benchmark tasks DW4, LJ13, and LJ55 \citep{kohler2020equivariant, klein2023equivariant}. Here, pairwise potentials $v(x_i, x_j)$ are summed as the total energy $u(x)$:
\begin{equation}
    u(x) = \sum_{i,j} v(x_i, x_j).
\end{equation}
DW4 uses a double-well potential $v_{\mathtt{DW}}$ and considers four particles in 2D. LJ13 and LJ55 both employ a Lennard-Jones potential $v_{\mathtt{LJ}}$ between 13 respectively 55 particles in 3D space (see \cref{app:bg} for details). We make use of the datasets presented by \cite{klein2023equivariant}, which obtained samples from $p(x)$ via MCMC.\footnote{Datasets available at: \url{https://osf.io/srqg7/?view_only=28deeba0845546fb96d1b2f355db0da5}}

In \cref{tab:boltzmann-overview}, we compare our model against (i) the equivariant ODE normalizing flow trained with maximum likelihood $E(n)$-NF \citep{satorras2021equivariantflow}, and (ii) two equivariant ODEs trained via optimal transport (equivariant) flow matching \cite{klein2023equivariant}. We find our model to have comparable or better negative log-likelihood than competitors. In addition, $E(n)$-FFFs sample significantly faster than competitors because our model needs to evaluate the learned network only once, as opposed to the multiple evaluations required to integrate an ODE.

\begin{table}
    \centering
    \newcommand{\ralign}[1]{\multicolumn{1}{r}{#1}}
    \resizebox{\linewidth}{!}{
    \begin{tabular}{l|cccc} %
         & \multirow{2}{*}{NLL ($\downarrow$)} & \multirow{2}{*}{Stable ($\uparrow$)} & %
         \multicolumn{2}{c}{Sampling time ($\downarrow$)} \\
         &&& Raw & Stable \\ \hline 
        $E(3)$-NF & -59.7 & 4.9 \% & %
        \ralign{13.9 ms} & \ralign{309.5 ms} \\
        $E(3)$-DM & \textbf{-110.7} & \textbf{82.0} \% & 1580.8 ms& %
        \ralign{1970.6 ms}\\ 
        $E(3)$-FFF & -76.2& 8.7 \%& \ralign{\textbf{0.6} ms}& %
        \ralign{\textbf{8.1} ms}\\ \hline
        Data & - & 95.2 \% & %
        - & - \\ %
    \end{tabular}
    }
    \caption{$E(3)$-FFF (ours) trained on QM9 generates a stable molecule faster than previous models because a sample is obtained via a single function evaluation. $E(3)$-DM is the $E(3)$-diffusion model \citep{hoogeboom2022equivariant}, $E(3)$-NF the $E(3)$-normalizing flow \citep{satorras2021equivariantflow}. The latter is also trained explicitly using maximum likelihood, yet outperformed by $E(3)$-FFF in terms of negative log-likelihood (NLL) and what ratio of generated molecules is stable.}
    \label{tab:molecule-qm9}
\end{table}

\paragraph{QM9 Molecules} As a second molecule generation benchmark, we test the performance of $E(3)$-FFF in generating novel molecules. We therefore train on the QM9 dataset \citep{ruddigkeit2012, ramakrishnan2014}, which contains molecules of varying atom count, with the largest molecules counting $29$ atoms. The goal of the generative model is not only to predict the positions of the atoms in each molecule $x = (x_1, \dots, x_N) \in \R^3$, but also each atom's properties $h_i$ (atom type (categorical), and atom charge (ordinal)).

We again employ the $E(3)$-GNN \citep{satorras2021equivariant}. The part of the network that acts on coordinates $x_i \in \R^3$ is equivariant to rotations, reflections and translations (Euclidean group $E(3)$). The network leaves the atom properties $h$ invariant under these operations.

We show samples from our model in \cref{fig:molecule-qm9}. Because free-form flows only need one network evaluation to sample, they generate two orders of magnitude more stable molecules than the $E(3)$-diffusion model \citep{hoogeboom2022equivariant} and one order of magnitude more than the $E(3)$-normalizing flow \citep{satorras2021equivariantflow} in a fixed time window, see \cref{tab:molecule-qm9}. This includes the time to generate unstable samples, which are discarded. A molecule is called stable if each atom has the correct number of bonds, where bonds are determined from inter-atomic distances.
$E(3)$-FFF also outperforms $E(3)$-NF trained with maximum likelihood both in terms of likelihood and in how many of the sampled molecules are stable. See \cref{app:qm9} for implementation details.

\section{CONCLUSION}

In this work, we present free-form flows (FFF), a new paradigm for normalizing flows that enables training arbitrary dimension-preserving neural networks with maximum likelihood. Invertibility is achieved by a reconstruction loss and the likelihood is maximized by an efficient surrogate. Previously, designing normalizing flows was constrained by the need for analytical invertibility. Free-form flows allow practitioners to focus on the data and suitable inductive biases instead.

We show that free-form flows are an exact relaxation of maximum likelihood training, converging to the same solutions provided that the reconstruction loss is minimal. We provide an interpretation of FFF training as the minimization of a lower bound on the KL divergence between noisy versions of the data and the generative distribution. Furthermore this bound is tight if $f_\theta$ and $g_\phi$ are true inverses.

In practice, free-form flows perform on par or better than previous normalizing flows, exhibit fast sampling by only requiring a single function evaluation, and are easy to tune. We provide a practical guide for adapting them to new problems in \cref{app:practical-guide}.

\acknowledgments{This work is supported by Deutsche Forschungsgemeinschaft (DFG, German Research Foundation) under Germany's Excellence Strategy EXC-2181/1 - 390900948 (the Heidelberg STRUCTURES Cluster of Excellence). It is also supported by the Vector Stiftung in the project TRINN (P2019-0092). AR acknowledges funding from the Carl-Zeiss-Stiftung. LZ ackknowledges support by the German Federal Ministery of Education and Research (BMBF) (project EMUNE/031L0293A).
The authors acknowledge support by the state of Baden-Württemberg through bwHPC and the German Research Foundation (DFG) through grant INST 35/1597-1 FUGG.}

\bibliography{references}

\section*{Checklist}

 \begin{enumerate}

 \item For all models and algorithms presented, check if you include:
 \begin{enumerate}
   \item A clear description of the mathematical setting, assumptions, algorithm, and/or model. Yes
   \item An analysis of the properties and complexity (time, space, sample size) of any algorithm. Yes
   \item (Optional) Anonymized source code, with specification of all dependencies, including external libraries. Not applicable
 \end{enumerate}

 \item For any theoretical claim, check if you include:
 \begin{enumerate}
   \item Statements of the full set of assumptions of all theoretical results. Yes
   \item Complete proofs of all theoretical results. Yes
   \item Clear explanations of any assumptions. Yes
 \end{enumerate}

 \item For all figures and tables that present empirical results, check if you include:
 \begin{enumerate}
    \item The code, data, and instructions needed to reproduce the main experimental results (either in the supplemental material or as a URL). Yes
    \item All the training details (e.g., data splits, hyperparameters, how they were chosen). Yes
    \item A clear definition of the specific measure or statistics and error bars (e.g., with respect to the random seed after running experiments multiple times). Yes
    \item A description of the computing infrastructure used. (e.g., type of GPUs, internal cluster, or cloud provider). Yes
 \end{enumerate}

 \item If you are using existing assets (e.g., code, data, models) or curating/releasing new assets, check if you include:
 \begin{enumerate}
   \item Citations of the creator If your work uses existing assets. Yes
   \item The license information of the assets, if applicable. Not applicable
   \item New assets either in the supplemental material or as a URL, if applicable. Not applicable
   \item Information about consent from data providers/curators. Not applicable
   \item Discussion of sensible content if applicable, e.g., personally identifiable information or offensive content. Not applicable
 \end{enumerate}

 \item If you used crowdsourcing or conducted research with human subjects, check if you include:
 \begin{enumerate}
   \item The full text of instructions given to participants and screenshots. Not applicable
   \item Descriptions of potential participant risks, with links to Institutional Review Board (IRB) approvals if applicable. Not applicable
   \item The estimated hourly wage paid to participants and the total amount spent on participant compensation. Not applicable
 \end{enumerate}

 \end{enumerate}

\onecolumn
\aistatstitle{Supplementary Materials}

\appendix

\section*{OVERVIEW}

The appendix is structured into three parts:
\begin{itemize}
    \item \Cref{app:theory}: A restatement and proof of all theoretical claims in the main text, along with some additional results.
    \begin{itemize}
        \item \Cref{app:gradient-via-trace}: The gradient of the log-determiant can be written as a trace.
        \item \Cref{app:loss-derivation}: A derivation of the loss as a lower bound on a KL divergence.
        \item \Cref{app:error-bound}: A bound on the difference between the true gradient of the log-determinant and the estimator used in this work.
        \item \Cref{app:critical-points}: Properties of the critical points of the loss.
        \item \Cref{app:global-invertibility}: Exploration of behavior of the loss in the low $\beta$ regime, where the solution may not be globally invertible.
    \end{itemize}
    \item \Cref{app:practical-guide}: Practical tips on how to train free-form flows and adapt them to new problems.
    \begin{itemize}
        \item \Cref{app:model-setup}: Tips on how to set up and initialize the model.
        \item \Cref{app:training}: Code for computing the loss function.
        \item \Cref{app:likelihood}: Details on how to estimate likelihoods.
        \item \Cref{app:reconstruction-weight}: Tips on how to tune $\beta$.
    \end{itemize}
    \item \Cref{app:experiments}: Details necessary to reproduce all experimental results in the main text.
    \begin{itemize}
        \item \Cref{app:sbi}: Simulation-based inference.
        \item \Cref{app:molecule-generation}: Molecule generation.
    \end{itemize}
\end{itemize}

\vfill
\newpage

\section{THEORETICAL CLAIMS}
\label{app:theory}

This section contains restatements and proofs of all theoretical claims in the main text.

\subsection{Gradient via Trace}
\label{app:gradient-via-trace}

\begin{theorem}
\label{appthm:gradient-via-trace}
    Let $f_\theta: \R^D \to \R^D$ be a $C^1$ invertible function parameterized by $\theta$. Then, for all $x \in \R^D$:
    \begin{equation}
        \nabla_{\theta_i} \log|J_\theta(x)| = \tr\!\left(
            (\nabla_{\theta_i} J_\theta(x)) 
            (J_\theta(x))^{-1}
        \right).
    \end{equation}
\end{theorem}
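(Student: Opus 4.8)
The plan is to apply Jacobi's formula for the derivative of a log-determinant. Recall that for a smooth family of invertible matrices $A(t)$, Jacobi's formula gives $\frac{d}{dt}\log|\det A(t)| = \tr\!\left(A(t)^{-1} \frac{dA(t)}{dt}\right)$. Here I would set $A(\theta_i) = J_\theta(x)$, viewed as a function of the scalar parameter $\theta_i$ (holding $x$ and all other coordinates of $\theta$ fixed), and note that since $f_\theta$ is $C^1$ and invertible, $J_\theta(x)$ is an invertible matrix and depends differentiably on $\theta_i$. Applying the formula directly yields
\begin{equation}
    \nabla_{\theta_i} \log|\det J_\theta(x)| = \tr\!\left((J_\theta(x))^{-1} (\nabla_{\theta_i} J_\theta(x))\right),
\end{equation}
and by cyclicity of the trace this equals $\tr\!\left((\nabla_{\theta_i} J_\theta(x))(J_\theta(x))^{-1}\right)$, which is exactly the claimed identity since $|J_\theta(x)|$ denotes $|\det J_\theta(x)|$.

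For completeness I would include a short justification of Jacobi's formula in this setting rather than merely citing it. One clean route: write $\log|\det A| = \tfrac12 \log \det(A^T A)$ or, more simply when $A$ is invertible, use that near any invertible $A_0$ the map $A \mapsto \log|\det A|$ is differentiable with derivative $H \mapsto \tr(A_0^{-1} H)$; this follows from $\det(A_0 + H) = \det(A_0)\det(I + A_0^{-1}H)$ together with the first-order expansion $\det(I + E) = 1 + \tr(E) + o(\|E\|)$. Composing this derivative with the $C^1$ map $\theta_i \mapsto J_\theta(x)$ via the chain rule gives the result. A subtlety worth a sentence is that the sign of $\det J_\theta(x)$ is locally constant (by continuity and invertibility), so the absolute value does not interfere with differentiability and contributes nothing to the derivative.

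I do not anticipate a genuine obstacle here — the statement is essentially a restatement of Jacobi's formula and the proof is a one-line application plus a standard expansion. The only thing to be careful about is bookkeeping: keeping $x$ fixed throughout, treating $\theta_i$ as a single scalar coordinate so that $\nabla_{\theta_i}$ is an ordinary derivative, and confirming that $C^1$-ness of $f_\theta$ in $(\theta, x)$ jointly is what licenses differentiating the Jacobian entries in $\theta_i$. These are all routine, so the proof should be very short.
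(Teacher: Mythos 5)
Your proposal is correct and follows essentially the same route as the paper: a direct application of Jacobi's formula with $A = J_\theta(x)$ and $t = \theta_i$, followed by cyclicity of the trace. The extra remarks you include (the first-order expansion justifying Jacobi's formula and the observation that the sign of $\det J_\theta(x)$ is locally constant) are harmless additional detail but do not change the argument.
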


\begin{proof}
Jacobi's formula states that, for a matrix $A(t)$ parameterized by $t$, the derivative of the determinant is
\begin{equation}
    \frac{\mathrm{d}}{\mathrm{d}t} |A(t)| = |A(t)| \tr \left( A(t)^{-1} \frac{\mathrm{d}A(t)}{\mathrm{d}t} \right)
\end{equation}
and hence
\begin{align}
    \frac{\mathrm{d}}{\mathrm{d}t} \log |A(t)| 
        &= |A(t)|^{-1} \frac{\mathrm{d}}{\mathrm{d}t} |A(t)| \\
        &= \tr \left( A(t)^{-1} \frac{\mathrm{d}A(t)}{\mathrm{d}t} \right) \\
        &= \tr \left( \frac{\mathrm{d}A(t)}{\mathrm{d}t} A(t)^{-1} \right)
\end{align}
using the cyclic property of the trace in the last step.
Applying this formula, with $A = J_\theta(x)$ and $t = \theta_i$ gives the result.
\end{proof}

\subsection{Loss Derivation}
\label{app:loss-derivation}

Here we derive the loss function via an upper bound on a Kullback-Leibler (KL) divergence. Before doing so, let us establish some notation and motivation.

Our generative model is as follows: 
\begin{align}
    p(z) &= \N(z; 0, I) \\
    p_\phi(x \mid z) &= \delta(x - g_\phi(z)) 
\end{align}
meaning that to generate data we sample from a standard normal latent distribution and pass the sample through the generator network $g_\phi$.
The corresponding inference model is:
\begin{align}
    q(x) &= \text{data distribution} \\
    q_\theta(z \mid x) &= \delta(z - f_\theta(x))
\end{align}

Our goal is to minimize the KL divergence
\begin{align}
    \KL(q(x) \parallel p_\phi(x)) 
        &= \E_{q(x)} \left[ \log \frac{q(x)}{p_\phi(x)} \right] \\
        &= \E_{q(x)} \left[ - \log \int p_\phi(x, z) \mathrm{d}z \right] - h(q(x))
\end{align}
where $h$ denotes the differential entropy.
Unfortunately this divergence is intractable, due to the integral over $z$ (though it would be tractable if $g_\phi^{-1}$ and $\log|J_{g_\phi}(z)|$ are tractable due to the change of variables formula -- in this case the model would be a typical normalizing flow).
The variational autoencoder (VAE) is a latent variable model which solves this problem by minimizing
\begin{align}
    \KL(q_\theta(x, z) \parallel p_\phi(x, z)) 
        &= E_{q_\theta(x, z)} \left[ \log \frac{q_\theta(x, z)}{p_\phi(x, z)} \right] \\
        &= E_{q_\theta(x, z)} \left[ \log \frac{q(x)}{p_\phi(x)} + \log \frac{q_\theta(z \mid x)}{p_\phi(z \mid x)}
        \right] \\
        &= \KL(q(x) \parallel p_\phi(x)) + E_{q(x)} \left[ \KL(q_\theta(z \mid x) \parallel p_\phi(z \mid x)) \right] \\
        &\geq \KL(q(x) \parallel p_\phi(x))
\end{align}
The inequality comes from the fact that KL divergences are always non-negative.
Unfortunately this KL divergence is not well-defined due to the delta distributions, which make the joint distributions over $x$ and $z$ degenerate. Unless the support of $q_\theta(x, z)$ and $p_\phi(x, z)$ exactly overlap, which is very unlikely for arbitrary $f_\theta$ and $g_\phi$, the divergence will be infinite. The solution is to introduce an auxiliary variable $\tilde x$ which is the data with some added Gaussian noise:
\begin{equation}
    p(\tilde x \mid x) = q(\tilde x \mid x) = \N(\tilde x; x, \sigma^2 I)
\end{equation}
The generative model over $z$ and $\tilde x$ is therefore
\begin{align}
    p(z) &= \N(z; 0, I) \\
    p_\phi(\tilde x \mid z) &= \N(\tilde x; g_\phi(z), \sigma^2 I)
\end{align}
and the inference model is
\begin{align}
    q(\tilde x) &= \int q(x) q(\tilde x \mid x) \mathrm{d}x \\
    q(\tilde x \mid x) &= \N(\tilde x; x, \sigma^2 I) \\
    q_\theta(z \mid \tilde x) &= \frac{\int q(x) q(\tilde x \mid x) q_\theta(z \mid x) \mathrm{d}x}{\int q(x) q(\tilde x \mid x) \mathrm{d}x}
\end{align}

Now the relationship between $\tilde x$ and $z$ is stochastic and we can safely minimize the KL divergence which will always take on finite values:
\begin{equation}
    \KL(q_\theta(\tilde x, z) \parallel p_\phi(\tilde x, z)) \geq \KL(q(\tilde x) \parallel p_\phi(\tilde x))
\end{equation}
The KL divergence between the noised variables is known as a spread KL divergence $\tilde{\mathcal{D}}_\text{KL}$ \citep{zhang2020spread}:
\begin{equation}
    \tilde{\mathcal{D}}_\text{KL} (q(x) \parallel p_\phi(x)) = \KL (q(\tilde x) \parallel p_\phi(\tilde x))
\end{equation}
For convenience, here is the definition of $\loss^{f^{-1}}$:
\begin{equation} \label{eq:loss-f-inv}
    \loss^{f^{-1}} = \E_{q(x)p(v)} \left[ -\log p(Z = f_\theta(x)) - v^T J_\theta \texttt{SG}( J_\theta^{-1} v)   + \beta \lVert x - g_\phi(f_\theta(x)) \rVert^2 \right]
\end{equation}
which has the same gradients (with respect to model parameters) as
\begin{equation}
    \E_{q(x)} \left[ -\log p(Z = f_\theta(x)) - \log|J_\theta| + \beta \lVert x - g(f(x)) \rVert^2 \right]
\end{equation}
Now we restate the theorem from the main text:

\begin{theorem}
\label{appthm:loss-derivation}
Let $f_\theta$ and $g_\phi$ be $C^1$ and let $f_\theta$ be globally invertible. Define the spread KL divergence $\tilde{\mathcal{D}}_\text{KL}$ as the KL divergence between distributions convolved with isotropic Gaussian noise of variance $\sigma^2$. Let $\beta = 1/2\sigma^2$. Then there exists a function $D$ of $\theta$ and $\phi$ such that
\begin{equation}
    \nabla_\theta \loss^{f^{-1}} = \nabla_\theta D \quad \text{and} \quad \nabla_\phi \loss^{f^{-1}} = \nabla_\phi D
\end{equation}
and
\begin{equation}
     D \geq \tilde{\mathcal{D}}_\text{KL}( q(x) \parallel p_\phi(x) )
\end{equation}
As a result, minimizing $\loss^{f^{-1}}$ is equivalent to minimizing an upper bound on $\tilde{\mathcal{D}}_\text{KL}( q(x) \parallel p_\phi(x) )$. 
\end{theorem}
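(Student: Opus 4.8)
The plan is to identify $\loss^{f^{-1}}$, up to a constant that does not depend on the parameters, with the full joint KL divergence $\KL(q_\theta(\tilde x, z) \parallel p_\phi(\tilde x, z))$ between the inference and generative models over the latent $z$ and the Gaussian-noised data $\tilde x$ set up above, and then to bound this joint KL below by the spread divergence via the standard ELBO identity. Concretely, I would take $D := \KL(q_\theta(\tilde x, z) \parallel p_\phi(\tilde x, z))$. The inequality $D \geq \tilde{\mathcal{D}}_\text{KL}(q(x) \parallel p_\phi(x))$ is then immediate from the decomposition
\[
    \KL(q_\theta(\tilde x, z) \parallel p_\phi(\tilde x, z)) = \KL(q(\tilde x) \parallel p_\phi(\tilde x)) + \E_{q(\tilde x)}\!\left[\KL(q_\theta(z \mid \tilde x) \parallel p_\phi(z \mid \tilde x))\right],
\]
the non-negativity of KL divergence, and the definition $\tilde{\mathcal{D}}_\text{KL}(q(x) \parallel p_\phi(x)) = \KL(q(\tilde x) \parallel p_\phi(\tilde x))$. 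Thus the real content is the gradient identity $\nabla_\theta D = \nabla_\theta \loss^{f^{-1}}$ and $\nabla_\phi D = \nabla_\phi \loss^{f^{-1}}$.

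To establish that, I would decompose the same joint KL in the other order, using $q_\theta(\tilde x, z) = q_\theta(z)\, q_\theta(\tilde x \mid z)$ and $p_\phi(\tilde x, z) = p(z)\, p_\phi(\tilde x \mid z)$, which gives
\[
    D = \KL(q_\theta(z) \parallel p(z)) + \E_{q_\theta(z)}\!\left[\KL(q_\theta(\tilde x \mid z) \parallel p_\phi(\tilde x \mid z))\right].
\]
Global invertibility of $f_\theta$ enters twice: it makes $q_\theta(z)$ the genuine pushforward density $q(f_\theta^{-1}(z)) / |J_\theta(f_\theta^{-1}(z))|$, and, since the only $x$ consistent with a given $z$ under $q_\theta(z \mid x) = \delta(z - f_\theta(x))$ is $x = f_\theta^{-1}(z)$, it forces $q_\theta(\tilde x \mid z) = \N(\tilde x;\, f_\theta^{-1}(z),\, \sigma^2 I)$. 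After the change of variables $z = f_\theta(x)$ the first term equals $\E_{q(x)}[-\log p(f_\theta(x)) - \log|J_\theta(x)|] - h(q(x))$; the second is a KL between two Gaussians with common covariance $\sigma^2 I$, hence $\tfrac{1}{2\sigma^2}\,\E_{q(x)}[\lVert x - g_\phi(f_\theta(x)) \rVert^2] = \beta\,\E_{q(x)}[\lVert x - g_\phi(f_\theta(x)) \rVert^2]$ by the choice $\beta = 1/2\sigma^2$. Summing, $D$ is exactly the exact maximum-likelihood-plus-reconstruction objective minus the parameter-independent constant $h(q(x))$; since $\loss^{f^{-1}}$ with the Hutchinson/trace estimator has, as established for the trace estimator in \cref{sec:gradient-trick}, the same parameter gradients as that exact objective, the gradient identity follows.

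I expect the main obstacle to be careful bookkeeping rather than any deep difficulty: one must justify the closed forms of $q_\theta(z)$ and $q_\theta(\tilde x \mid z)$ despite the delta in $q_\theta(z \mid x)$ --- this is precisely where the auxiliary noise variable $\tilde x$ earns its keep, since it keeps the $(\tilde x, z)$ joint non-degenerate where the $(x, z)$ joint was not --- and one must verify that the Gaussian-KL and Jacobian change-of-variables terms reproduce $\loss^{f^{-1}}$ with exactly the stated relation $\beta = 1/2\sigma^2$. A secondary point worth flagging is finiteness: the cleanest route assumes $q(x)$ has a density with finite differential entropy so that $h(q(x))$ is a bona fide finite constant, whereas for a degenerate empirical $q(x)$ one instead relies on the estimator expression itself being finite and on the spread divergence being the quantity of genuine interest, as emphasized in the main text.
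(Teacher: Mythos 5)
Your proposal is correct, and it follows the paper's overall skeleton — define $D$ as the joint KL $\KL(q_\theta(\tilde x, z) \parallel p_\phi(\tilde x, z))$, get the bound from non-negativity of the conditional KL, show $D$ coincides with the exact maximum-likelihood-plus-reconstruction objective up to a parameter-independent constant, and finally invoke the trace-estimator gradient equivalence from \cref{sec:gradient-trick} — but the middle step is carried out via a genuinely different decomposition. The paper conditions on $\tilde x$: it uses the entropy identity $h(Z) = h(X) + \E[\log|J_f(X)|]$, changes variables from $(\tilde x, z)$ to $(x, \epsilon)$, expands the Gaussian quadratic and discards the cross term via $\E[\epsilon] = 0$. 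You instead apply the KL chain rule in the other order, $D = \KL(q_\theta(z) \parallel p(z)) + \E_{q_\theta(z)}\left[\KL(q_\theta(\tilde x \mid z) \parallel p_\phi(\tilde x \mid z))\right]$, where global invertibility gives the pushforward density $q_\theta(z)$ and the closed form $q_\theta(\tilde x \mid z) = \N(\tilde x; f_\theta^{-1}(z), \sigma^2 I)$; the first term then reduces by change of variables to the exact NLL minus $h(q(x))$, and the second is a same-covariance Gaussian--Gaussian KL equal to $\beta\,\E_{q(x)}[\lVert x - g_\phi(f_\theta(x))\rVert^2]$ with $\beta = 1/2\sigma^2$. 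Your route buys an explicit identification of the additive constant as $-h(q(x))$, avoids manipulating the noise variable and the cross-term cancellation, and makes the bound gap transparently the posterior mismatch; the paper's route never needs the latent marginal $q_\theta(z)$ in closed form. Both arguments share the same implicit regularity requirement — that $q(x)$ admit a density so the intermediate entropy/density terms are finite — which you correctly flag, and both terminate with the same appeal to the fact that the surrogate $-v^T J_\theta\, \mathtt{SG}(J_\theta^{-1} v)$ shares parameter gradients with $-\log|J_\theta|$.
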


\begin{proof}
Let 
\begin{equation}
    D = \KL(q_\theta(\tilde x, z) \parallel p_\phi(\tilde x, z))
\end{equation}
We will use the identity \citep{papoulis2002probability}
\begin{equation}
    h(Z) = h(X) + \E [ \log |J_f(X)| ]
\end{equation}
where $h$ is the differential entropy, and the random variables are related by $Z = f(X)$ where $f$ is invertible. 
As a result,
\begin{equation} \label{eq:entropy-q-z-tilde-x}
    h(q(z \mid \tilde x)) = h(q(x \mid \tilde x)) + \E_{q(x \mid \tilde x)} [ \log |J_f(x)| ] = \E_{q(x)} [ \log |J_f(x)| ] + \text{const.}
\end{equation}

In the following, drop $\theta$ and $\phi$ subscripts for convenience. The unspecified extra terms are constant with respect to network parameters. Let $\epsilon$ be a standard normal variable. Then:
\begin{align}
    D   &= \E_{q(\tilde x, z)} \left[ \log q(\tilde x) + \log q(z \mid \tilde x) - \log p(z) - \log p(\tilde x \mid z) \right] \label{eq:D1} \\
        &= \E_{q(\tilde x)} \left[ - h(q(z \mid \tilde x)) \right] + \E_{q(\tilde x, z)} \left[ - \log p(z) - \log p(\tilde x \mid z) \right] + \text{const.} \label{eq:D2} \\
        &= \E_{q(x)} \left[ - \log |J_f(x)| \right] + \E_{q(\tilde x, z)} \left[ - \log p(z) - \log p(\tilde x \mid z) \right] + \text{const.} \label{eq:D3} \\
        &= \E_{q(x)q(\epsilon)} \left[ - \log |J_f(x)| - \log p(Z = f(x)) - \log p(\tilde X = x + \sigma\epsilon \mid Z = f(x)) \right] + \text{const.} \label{eq:D4} \\
        &= \E_{q(x)q(\epsilon)} \left[ -\log |J_f(x)| - \log p(Z = f(x)) + \frac{1}{2\sigma^2} \lVert x + \sigma\epsilon - g(f(x)) \rVert^2 \right] + \text{const.} \label{eq:D5} \\
        &= \E_{q(x)q(\epsilon)} \left[ -\log |J_f(x)| - \log p(Z = f(x)) + \frac{1}{2\sigma^2} \lVert x - g(f(x)) \rVert^2 + \frac{1}{\sigma}\epsilon^\top (x - g(f(x))) \right] + \text{const.} \label{eq:D6} \\
        &= \E_{q(x)} \left[ -\log |J_f(x)| - \log p(Z = f(x)) + \beta \lVert x - g(f(x)) \rVert^2 \right] + \text{const.} \label{eq:D7}
\end{align}
Where the following steps were taken:
\begin{itemize}
    \item Regard $\E_{q(\tilde x)}[\log q(\tilde x)] = -h(q(\tilde x))$ as a constant (\cref{eq:D2})
    \item Substitute in \cref{eq:entropy-q-z-tilde-x} and regard $h(q(x \mid \tilde x))$ as constant (\cref{eq:D3})
    \item Make a change of variables from $\tilde x, z$ to $x, \epsilon$ with $\tilde x = x + \sigma\epsilon$ and $z = f(x)$ (\cref{eq:D4})
    \item Substitute the log-likelihood of the Gaussian $p(\tilde x \mid z)$, discard constant terms (\cref{eq:D5})
    \item Expand the final quadratic term, discard the constant term in $\lVert \epsilon \rVert^2$ (\cref{eq:D6})
    \item Evaluate the expectation over $\epsilon$, noting that $\epsilon$ is independent of $x$ and $\E[\epsilon] = 0$. Substitute $\beta = 1/2\sigma^2$ (\cref{eq:D7})
    \item  Recognize that the final expression has the same gradient as $\loss^{f^{-1}}$ from \cref{eq:loss-f-inv}
\end{itemize}

Since the extra terms are constant with respect to $\theta$ and $\phi$ we have
\begin{equation}
    \nabla_\theta \loss^{f^{-1}} = \nabla_\theta D \quad \text{and} \quad \nabla_\phi \loss^{f^{-1}} = \nabla_\phi D
\end{equation}
and
\begin{equation}
     D \geq \KL( q(\tilde x) \parallel p_\phi(\tilde x) ) = \tilde{\mathcal{D}}_\text{KL} (q(x) \parallel p_\phi(x))
\end{equation}
was already established. As a result the gradients of $\loss^{f^{-1}}$ are an unbiased estimate of the gradients of $D$ and minimizing $\loss^{f^{-1}}$ under stochastic gradient descent will converge to the same solutions as when minimizing $D$.
\end{proof}

We can also demonstrate the related bound:
\begin{equation}
    D' \geq \KL(q(x) \parallel p_\theta(x))
\end{equation}
where $D' = \loss^{f^{-1}} + \text{const.}$ (with a different constant to $D$). This is easy to see, since $\loss^{f^{-1}} = \loss^{f^{-1}}_\text{ML} + \beta \loss^{f^{-1}}_\text{R}$ and $\loss^{f^{-1}}_\text{ML} = \KL(q(x) \parallel p_\theta(x)) + \text{const.}$ and $\loss^{f^{-1}}_\text{R} \geq 0$.

\subsection{Error Bound}
\label{app:error-bound}

\begin{theorem}
\label{appthm:error-bound}
Let $f_\theta$ and $g_\phi$ be $C^1$, let $J_\theta$ be the Jacobian of $f_\theta$ at $x$ and let $J_\phi$ be the Jacobian of $g_\phi$ at $f_\theta(x)$. Suppose that $f_\theta$ is locally invertible at $x$, meaning $J_\theta(x)$ is an invertible matrix. Let $\lVert \cdot \rVert_F$ be the Frobenius norm of a matrix. Then the absolute difference between $\nabla_{\theta_i} \log |J_\theta(x)|$ and the trace-based approximation is bounded:
\begin{equation}
    \left| \tr((\nabla_{\theta_i} J_\theta) J_\phi ) - \nabla_{\theta_i} \log |J_\theta| \right| \leq \lVert J_\theta^{-1} \nabla_{\theta_i} J_\theta \rVert_F \lVert J_\phi J_\theta - I \rVert_F
\end{equation}
\end{theorem}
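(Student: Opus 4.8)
The plan is to rewrite the left-hand side as a single trace of a matrix product and then apply the Cauchy--Schwarz inequality for the Frobenius inner product. Since $J_\theta(x)$ is invertible, \cref{appthm:gradient-via-trace} applies and gives $\nabla_{\theta_i}\log|J_\theta| = \tr((\nabla_{\theta_i}J_\theta)\,J_\theta^{-1})$. Subtracting this from the trace-based approximation $\tr((\nabla_{\theta_i}J_\theta)\,J_\phi)$ and using linearity of the trace,
\[
    \tr((\nabla_{\theta_i}J_\theta)\,J_\phi) - \nabla_{\theta_i}\log|J_\theta|
    = \tr\!\big((\nabla_{\theta_i}J_\theta)(J_\phi - J_\theta^{-1})\big).
\]

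Next I would factor the Jacobian discrepancy through the quantity appearing in the bound. Because $J_\theta^{-1}$ is a genuine two-sided inverse (local invertibility at $x$), we have $J_\phi - J_\theta^{-1} = (J_\phi J_\theta - I)\,J_\theta^{-1}$. Substituting this and invoking the cyclic property of the trace turns the expression into $\tr(MN)$ with $M = J_\theta^{-1}(\nabla_{\theta_i}J_\theta)$ and $N = J_\phi J_\theta - I$. Finally, applying $|\tr(MN)| \le \|M\|_F \|N\|_F$ --- which follows from Cauchy--Schwarz for the Frobenius inner product $\langle A,B\rangle = \tr(A^\top B)$ together with $\|M^\top\|_F = \|M\|_F$ --- yields exactly
\[
    \big|\tr((\nabla_{\theta_i}J_\theta)J_\phi) - \nabla_{\theta_i}\log|J_\theta|\big| \le \|J_\theta^{-1}\nabla_{\theta_i}J_\theta\|_F \, \|J_\phi J_\theta - I\|_F.
\]

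There is no deep obstacle here: the argument is a short chain of algebraic identities plus one standard inequality. The points needing a little care are (i) writing the factorization of $J_\phi - J_\theta^{-1}$ on the correct side --- it genuinely uses that $J_\theta^{-1}$ inverts $J_\theta$ on both sides --- and (ii) keeping track of which grouping of Frobenius norms is claimed: the main-text form of the theorem instead groups the factors as $\|(\nabla_{\theta_i}J_\theta)J_\theta^{-1}\|_F \, \|J_\theta J_\phi - I\|_F$, obtained by writing $J_\phi - J_\theta^{-1} = J_\theta^{-1}(J_\theta J_\phi - I)$ and cycling the trace the other way; both groupings are valid and one simply selects the one matching the statement being proved. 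One may also note that both sides vanish in the limit $J_\phi \to J_\theta^{-1}$, so the bound is tight exactly where $g_\phi$ becomes a local inverse of $f_\theta$.
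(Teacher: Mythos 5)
Your proof is correct and follows essentially the same route as the paper: rewrite the difference as $\tr\bigl((\nabla_{\theta_i}J_\theta)(J_\phi - J_\theta^{-1})\bigr)$, factor out $J_\theta^{-1}$, and apply Cauchy--Schwarz for the Frobenius inner product. The only (harmless) difference is the side on which you factor: your choice $J_\phi - J_\theta^{-1} = (J_\phi J_\theta - I)J_\theta^{-1}$ yields exactly the grouping $\lVert J_\theta^{-1}\nabla_{\theta_i}J_\theta\rVert_F\,\lVert J_\phi J_\theta - I\rVert_F$ stated here, whereas the paper's own proof factors the other way and arrives at the main-text grouping $\lVert(\nabla_{\theta_i}J_\theta)J_\theta^{-1}\rVert_F\,\lVert J_\theta J_\phi - I\rVert_F$; as you observe, both are valid.
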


\begin{proof}
The Cauchy-Schwarz inequality states that, for an inner product $\langle \cdot, \cdot \rangle$
\begin{equation}
    |\langle u, v \rangle|^2 \leq \langle u, u \rangle \langle v, v \rangle
\end{equation}
The trace forms the so-called Frobenius inner product over matrices with $\langle A, B \rangle_F = \tr(A^T B)$. Applying the inequality gives
\begin{align}
    |\tr(A^T B)|^2 
        &\leq \tr(A^T A) \tr(B^T B) \\
        &= \lVert A \rVert_F^2 \lVert B \rVert_F^2
\end{align}
with $\lVert A \rVert_F = \sqrt{\tr(A^T A)}$ the Frobenius norm of $A$.

Recall from \cref{appthm:gradient-via-trace} that
\begin{equation}
    \nabla_{\theta_i} \log |J_\theta| = \tr((\nabla_{\theta_i} J_\theta) J_\theta^{-1})
\end{equation}
Therefore
\begin{align}
    \left| \tr((\nabla_{\theta_i} J_\theta) J_\phi) - \nabla_{\theta_i} \log |J_\theta| \right|
        &= \left| \tr((\nabla_{\theta_i} J_\theta) J_\phi) - \tr((\nabla_{\theta_i} J_\theta) J_\theta^{-1}) \right| \\
        &= \left| \tr((\nabla_{\theta_i} J_\theta)(J_\phi - J_\theta^{-1})) \right| \\
        &= \left| \tr((\nabla_{\theta_i} J_\theta) J_\theta^{-1} (J_\theta J_\phi  - I) ) \right| \\
        &\leq \lVert (\nabla_{\theta_i} J_\theta) J_\theta^{-1} \rVert_F \lVert J_\theta J_\phi - I \rVert_F
\end{align}
where the last line is application of the Cauchy-Schwarz inequality and the cyclicity of the trace.
\end{proof}

\begin{theorem}
Suppose the conditions of \cref{appthm:error-bound} hold but extend local invertibility of $f_\theta$ to invertibility wherever $q(x)$ has support. Then the difference in gradients between $\loss^g$ and $\loss^{f^{-1}}$ is bounded:
\begin{equation}
    \left| \nabla_{\theta_i} \loss^g - \nabla_{\theta_i} \loss^{f^{-1}} \right| \leq \E_{q(x)} \left[ \lVert (\nabla_{\theta_i} J_\theta) J_\theta^{-1} \rVert_F^2 \right]^{\frac{1}{2}} \E_{q(x)} \left[ \lVert J_\theta J_\phi - I \rVert_F^2 \right]^{\frac{1}{2}}
\end{equation}
\end{theorem}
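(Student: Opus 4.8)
The plan is to reduce the claimed inequality to the pointwise estimator-error bound of \cref{appthm:error-bound} combined with one more application of Cauchy--Schwarz, this time for the expectation over $q(x)$.

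First I would isolate the only place where $\loss^g$ and $\loss^{f^{-1}}$ differ. Both losses decompose as a maximum-likelihood term plus $\beta\loss_\text{R}$, and the reconstruction terms are literally identical, so they contribute nothing to the gradient difference; within the maximum-likelihood terms the $-\log p(f_\theta(x))$ contributions are also identical. Hence $\nabla_{\theta_i}\loss^g - \nabla_{\theta_i}\loss^{f^{-1}} = \nabla_{\theta_i}\loss_\text{ML}^g - \nabla_{\theta_i}\loss_\text{ML}^{f^{-1}}$, and only the terms $-v^T J_\theta\,\mathtt{SG}(J_\phi v)$ versus $-v^T J_\theta\,\mathtt{SG}(J_\theta^{-1}v)$ remain to be compared.

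Next I would evaluate these gradients. Differentiating past the stop-gradient (which freezes the second factor) and taking the expectation over $v$ with $\E[vv^T]=I$ converts the Hutchinson form into a trace, giving $\nabla_{\theta_i}\loss_\text{ML}^g = -\E_{q(x)}[\nabla_{\theta_i}\log p(f_\theta(x)) + \tr((\nabla_{\theta_i}J_\theta)J_\phi)]$, and likewise, using \cref{appthm:gradient-via-trace} to identify $\tr((\nabla_{\theta_i}J_\theta)J_\theta^{-1}) = \nabla_{\theta_i}\log|J_\theta|$, $\nabla_{\theta_i}\loss_\text{ML}^{f^{-1}} = -\E_{q(x)}[\nabla_{\theta_i}\log p(f_\theta(x)) + \nabla_{\theta_i}\log|J_\theta|]$. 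Subtracting yields $\nabla_{\theta_i}\loss^g - \nabla_{\theta_i}\loss^{f^{-1}} = \E_{q(x)}[\nabla_{\theta_i}\log|J_\theta| - \tr((\nabla_{\theta_i}J_\theta)J_\phi)]$. Finally I would take absolute values, push the absolute value inside the expectation by Jensen, bound the integrand pointwise by \cref{appthm:error-bound} as $\lVert (\nabla_{\theta_i}J_\theta)J_\theta^{-1}\rVert_F\,\lVert J_\theta J_\phi - I\rVert_F$, and then apply the Cauchy--Schwarz inequality for $\E_{q(x)}[\,\cdot\,]$ to split this into the product of the two root-mean-square factors in the statement. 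The extended invertibility hypothesis (invertible $J_\theta$ wherever $q(x)$ has support) is precisely what makes $\nabla_{\theta_i}\log|J_\theta|$ well defined and the pointwise bound of \cref{appthm:error-bound} applicable $q$-almost everywhere.

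I expect the main obstacle to be the bookkeeping in the middle step: justifying the interchange of $\nabla_{\theta_i}$ with $\E_{q(x)p(v)}$, verifying that the stop-gradient drops out cleanly so that differentiation only hits $J_\theta$, and confirming that $\E[vv^T]=I$ is the sole property of $v$ needed for the Hutchinson-estimator gradients to agree in expectation with the trace expressions. Once that identity is in place, the remaining two Cauchy--Schwarz steps are routine.
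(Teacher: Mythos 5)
Your proposal is correct and follows essentially the same route as the paper's proof: isolate the difference to the log-determinant gradient estimator, rewrite it as $\E_{q(x)}[\tr((\nabla_{\theta_i}J_\theta)J_\theta^{-1}(J_\theta J_\phi - I))]$ (equivalently, invoke the pointwise bound of \cref{appthm:error-bound}), then apply Jensen followed by Cauchy--Schwarz/H\"older (with $p=q=2$) over $q(x)$ to obtain the product of root-mean-square factors. The only cosmetic difference is that you cite the already-proven pointwise error bound where the paper re-derives it inline, and you spell out the stop-gradient/Hutchinson bookkeeping that the paper treats as immediate.
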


\begin{proof}
In addition to the Cauchy-Schwarz inequality used in the proof to \cref{appthm:error-bound}, we will also require Jensen's inequality for a convex function $\alpha: \R \rightarrow \R$
\begin{equation}
    \alpha(\E_{q(x)}[x]) \leq \E_{q(x)}[\alpha(x)]
\end{equation}
and Hölder's inequality (with $p=q=2$) for random variables $X$ and $Y$
\begin{equation}
    \E[|XY|] \leq \E[|X|^2]^{\frac{1}{2}} \E[|Y|^2]^{\frac{1}{2}}
\end{equation}

The only difference between $\loss^g$ and $\loss^{f^{-1}}$ is in the estimation of the gradient of the log-determinant. We use this fact, along with the inequalities, which we apply in the Jensen, Cauchy-Schwarz, Hölder order:
\begin{align}
    \left| \nabla_{\theta_i} \loss^g - \nabla_{\theta_i} \loss^{f^{-1}} \right|
        &= \left| \E_{q(x)} \left[ \tr((\nabla_{\theta_i} J_\theta) J_\phi) \right] - \E_{q(x)} \left[ \nabla_{\theta_i} \log |J_\theta| \right] \right| \\
        &= \left| \E_{q(x)} \left[ \tr((\nabla_{\theta_i} J_\theta) J_\theta^{-1}(J_\theta J_\phi - I)) \right] \right| \\
        &\leq \E_{q(x)} \left[ \left| \tr((\nabla_{\theta_i} J_\theta) J_\theta^{-1}(J_\theta J_\phi - I)) \right| \right] \\
        &\leq \E_{q(x)} \left[ \lVert (\nabla_{\theta_i} J_\theta) J_\theta^{-1} \rVert_F \lVert J_\theta J_\phi - I \rVert_F \right] \\
        &\leq \E_{q(x)} \left[ \lVert (\nabla_{\theta_i} J_\theta) J_\theta^{-1} \rVert_F^2 \right]^{\frac{1}{2}} \E_{q(x)} \left[ \lVert J_\theta J_\phi - I \rVert_F^2 \right]^{\frac{1}{2}}
\end{align}
\end{proof}

\subsection{Critical Points}
\label{app:critical-points}

\begin{theorem}
\label{appthm:critical-points}
Let $f_\theta$ and $g_\phi$ be $C^1$ and let $f_\theta$ be globally invertible. Suppose $q(x)$ is finite and has support everywhere. Then the critical points of $\loss^{f^{-1}}$ (for any $\beta > 0$) are such that 
\begin{enumerate}
    \item $g_\phi(z) = f_\theta^{-1}(z)$ for all $z$, and
    \item $p_\theta(x) = q(x)$ for all $x$, and
    \item All critical points are global minima
\end{enumerate}

Furthermore, every minimum of $\loss^{f^{-1}}$ is a critical point of $\loss^g$. If the reconstruction loss is minimal, $\loss^g$ has no additional critical points.
\end{theorem}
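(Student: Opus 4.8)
The plan is to replace the stochastic surrogate in $\loss^{f^{-1}}$ by its deterministic mean — legitimate because, as shown in the gradient‑trick section, the surrogate is an unbiased estimator of the true gradient — and to analyze the ``effective'' objective
\begin{align*}
  \mathcal{E}(f,g) &= \E_{q(x)}\!\left[-\log p(f(x)) - \log|J_f(x)| + \tfrac{\beta}{2}\|x - g(f(x))\|^2\right] \\
  &= \KL(q \parallel p_f) + \beta\,\loss_\text{R} + \mathrm{const},
\end{align*}
where $p_f(x) = p(f(x))\,|J_f(x)|$ is the model density (the quantity written $p_\theta$ in the statement). Thus ``$f$ (resp.\ $g$) is a critical point of $\loss^{f^{-1}}$'' means the Gâteaux derivative of $\mathcal{E}$ vanishes along every $C^1$ perturbation $\delta f$ (resp.\ $\delta g$); wherever $\mathcal{E}$ is finite we have $|J_f|>0$, so $f$ is a $C^1$ diffeomorphism of $\R^D$. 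First I would vary $g$ with $f$ fixed: only $\loss_\text{R}$ depends on $g$, and after the substitution $z=f(x)$ it equals $\tfrac{\beta}{2}\int \tilde q(z)\,\|f^{-1}(z)-g(z)\|^2\,dz$ with $\tilde q = f_\ast q$. Since $q$ has full support and $f$ is a diffeomorphism, $\tilde q$ has full support on $\R^D$, so the Gâteaux derivative $\beta\int \tilde q(z)\,(g(z)-f^{-1}(z))\cdot \zeta(z)\,dz$ vanishes for all $\zeta$ only when $g=f^{-1}$ everywhere — which is also the global minimizer, $\loss_\text{R}=0$. This proves item~1 and shows every critical point has $g=f^{-1}$.

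Next I would vary $f$ at such a point. The point of $g=f^{-1}$ is that the reconstruction residual $g(f(x))-x$ then vanishes identically, so the first variation of $\loss_\text{R}$ in the $f$‑direction is zero and stationarity of $\mathcal{E}$ in $f$ reduces to stationarity of $f\mapsto \KL(q\parallel p_f)$. By invariance of KL under diffeomorphisms, $\KL(q\parallel p_f) = \KL(f_\ast q\parallel p)$; I would parametrize perturbations as $f_s = (\mathrm{id}+s\xi)\circ f$ — every $C^1$ perturbation $\delta f$ has this form with $\xi=\delta f\circ f^{-1}$, and it suffices to use compactly supported $\xi$ — so that $(f_s)_\ast q = (\mathrm{id}+s\xi)_\ast\tilde q$. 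The standard first‑variation formula for relative entropy along a transport perturbation (continuity equation $\partial_s\rho_s = -\nabla\!\cdot(\rho_s\xi)+O(s)$, then integration by parts with no boundary terms) gives
\[
  \left.\frac{d}{ds}\right|_{s=0}\KL\big((\mathrm{id}+s\xi)_\ast\tilde q \,\big\|\, p\big) = \int \tilde q(z)\,\xi(z)\cdot\nabla\log\frac{\tilde q(z)}{p(z)}\,dz ,
\]
and vanishing for all compactly supported $\xi$ forces $\nabla\log(\tilde q/p)\equiv 0$ on $\{\tilde q>0\}=\R^D$; since $\R^D$ is connected and $\int\tilde q=\int p=1$, this forces $\tilde q=p$, i.e.\ $p_f=q$ (item~2). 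Item~3 is then immediate: at a critical point $\loss_\text{R}=0$ and $\KL(q\parallel p_f)=0$ are simultaneously at their global minima, so $\mathcal{E}$ attains its infimum and every critical point is a global minimum.

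For the last two claims I would compare $\loss^g$ with $\loss^{f^{-1}}$ directly. They have the same $\phi$‑gradient everywhere — the maximum‑likelihood terms contribute none ($\loss^{f^{-1}}_\text{ML}$ has no $\phi$, and in $\loss^g_\text{ML}$ the Jacobian $J_\phi$ sits inside a stop‑gradient) and the reconstruction term is shared — while their $\theta$‑gradients differ only by the replacement of $\tr((\nabla_{\theta_i}J_\theta)J_\theta^{-1})$ with $\tr((\nabla_{\theta_i}J_\theta)J_\phi)$, which coincide when $J_\phi=J_\theta^{-1}$, i.e.\ when $g_\phi=f_\theta^{-1}$. Hence: (a) at a minimum of $\loss^{f^{-1}}$ we have $g_\phi=f_\theta^{-1}$ by item~1, so both gradients agree and are zero there, making it a critical point of $\loss^g$; and (b) at any critical point of $\loss^g$ with minimal reconstruction loss we again have $g_\phi=f_\theta^{-1}$ (using global invertibility of $f_\theta$), so the gradients agree, $\nabla\loss^{f^{-1}}=0$ there, and by items~1--3 it is a global minimum of $\loss^{f^{-1}}$; thus $\loss^g$ has no critical points beyond the minima of $\loss^{f^{-1}}$ once the reconstruction loss is minimal.

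The step I expect to be the real obstacle is the $f$‑variation in the second paragraph: making the first‑variation formula for $\KL(f_\ast q\parallel p)$ fully rigorous — justifying the interchange of $d/ds$ and the integral, performing the integration by parts with compactly supported test fields so that no boundary terms survive, and passing from ``$\tilde q\,\nabla\log(\tilde q/p)\equiv 0$'' to ``$\tilde q=p$'' using that $\tilde q$ has full, connected support. The degeneracy coming from the Dirac encoder/decoder is already handled by working, as in the loss‑derivation section, entirely at the level of the densities $q$, $\tilde q$ and $p$ rather than with the joint distributions over $x$ and $z$.
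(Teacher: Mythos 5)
Your proposal is correct in substance and reaches all five claims, but the key step (statement 2) goes by a genuinely different route than the paper. The paper writes $\loss^{f^{-1}}=\int\lambda(x,f,f',g)\,\mathrm{d}x$ and applies the Euler--Lagrange equations directly in data space: the $g$-variation is the same as yours, but for the $f$-variation it manipulates the Euler--Lagrange equation componentwise, invoking Jacobi's formula and the symmetry of second derivatives to integrate the resulting identity up to $\log q(x)=-\tfrac12\lVert f(x)\rVert^2+\log|f'(x)|+\mathrm{const}$, i.e.\ $p_\theta=q$ pointwise. You instead exploit $\KL(q\parallel p_f)=\KL(f_\ast q\parallel p)$, parametrize perturbations as $(\mathrm{id}+s\xi)\circ f$, and use the transport first-variation of relative entropy (continuity equation plus integration by parts) to force $\nabla\log(\tilde q/p)\equiv 0$ and hence $f_\ast q=p$. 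Your route avoids the index-heavy computation and the explicit appearance of $f''$, and makes the structure of the stationarity condition transparent; the paper's route stays in data space and yields the density identity without introducing the push-forward $\tilde q$, though both arguments implicitly need more smoothness (differentiable $q$, second derivatives of $f$) than the stated $C^1$, and both gloss over possible non-surjectivity of $f$, so your level of rigor matches the paper's. Your handling of statements 1 and 3 and of the $\loss^g$ comparison (stop-gradient kills the $\phi$-dependence of the ML term; the $\theta$-gradients coincide once $J_\phi=J_\theta^{-1}$, which minimal reconstruction loss guarantees) is essentially the paper's argument, which it phrases via the Euler--Lagrange equations of the surrogate Lagrangian $\tilde\lambda$ with the substitution $g'=f'^{-1}$.
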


\begin{proof}
In the following we will use Einstein notation, meaning that repeated indices are summed over. For example, $a_i b_i$ is shorthand for $\sum_i a_i b_i$. We will drop $\theta$ and $\phi$ subscripts to avoid clutter. We will also use primes to denote derivatives, for example: $f'(x) = J_f(x)$. In addition, gradients with respect to parameters should be understood as representing the gradient of a single parameter at a time, so $\nabla_\theta \loss$ is shorthand for $(\nabla_{\theta_1} \loss, \dots)$.

We will use the calculus of variations to find the critical points on a functional level. For a primer on calculus of variations, please see \cite{weinstock1974calculus}. Our loss is of the form
\begin{equation}
    \loss^{f^{-1}} = \int \lambda(x, f, f', g) \mathrm{d}x
\end{equation}
with 
\begin{equation}
    \lambda(x, f, f', g) = q(x) \left( \frac{1}{2} \lVert f(x) \rVert^2 - \log |f'(x)| + \beta \lVert g(f(x)) - x \rVert^2 \right)
\end{equation}
By the Euler-Lagrange equations, critical points satisfy
\begin{equation}
    \frac{\partial \lambda}{\partial g_i} = 0
\end{equation}
for all $i$ and 
\begin{equation}
    \frac{\partial \lambda}{\partial f_i} - \frac{\partial}{\partial x_j}\left(\frac{\partial \lambda}{\partial f'_{ij}} \right) = 0
\end{equation}
for all $i$.

Taking the derivative with respect to $g$:
\begin{equation}
    \frac{\partial \lambda}{\partial g_i} = q(x) \cdot 2\beta (g(f(x)) - x)_i = 0
\end{equation}
and hence $g(f(x)) - x = 0$ for all $x$ (since $q(x) > 0$).
By a change of variables with $z = f(x)$, this means $g = f^{-1}$. Therefore we have proven statement 1.

Now differentiating with respect to $f$ and substituting $g = f^{-1}$:
\begin{align}
    \frac{\partial \lambda}{\partial f_i} 
        &= q(x) \left( f_i(x) + 2\beta (g(f(x)) - x)_j g'_{ji}(f(x)) \right) \\
        &= q(x) f_i(x)
\end{align}
and with respect to $f'$:
\begin{align}
    \frac{\partial \lambda}{\partial f'_{ij}} 
        &= -q(x) (f'(x)^{-1})_{lk} \frac{\partial f'_{kl}}{\partial f'_{ij}} \\
        &= -q(x) (f'(x)^{-1})_{ji}
\end{align}
meaning
\begin{align}
    \frac{\partial}{\partial x_j}\left(\frac{\partial \lambda}{\partial f'_{ij}} \right) 
        &= - \frac{\partial}{\partial x_j} q(x) (f'(x)^{-1})_{ji} - q(x) \frac{\partial}{\partial x_j} (f'(x)^{-1})_{ji} \\
        &= - q(x) \left( \frac{\partial}{\partial x_j} \log q(x) (f'(x)^{-1})_{ji} + \frac{\partial}{\partial x_j} (f'(x)^{-1})_{ji} \right)
\end{align}
Putting it together means
\begin{equation}
    q(x) \left( f_i(x) + \frac{\partial}{\partial x_j} \log q(x) (f'(x)^{-1})_{ji} + \frac{\partial}{\partial x_j} (f'(x)^{-1})_{ji} \right) = 0
\end{equation}

By dividing by $q(x)$ and multiplying by $f'_{ik}(x)$, we have
\begin{equation}
\label{eq:deriv-log-q-wrt-x}
    \frac{\partial}{\partial x_k} \log q(x) = -f_i(x) f'_{ik}(x) - \frac{\partial}{\partial x_j} (f'(x)^{-1})_{ji} f'_{ik}(x)
\end{equation}
Furthermore, since $f(x)$ is invertible:
\begin{equation}
    \frac{\partial}{\partial x_j} \left(f'(x)^{-1} f'(x) \right)_{jk} = \frac{\partial}{\partial x_j} \delta_{jk} = 0
\end{equation}
Then using the product rule:
\begin{equation}
\label{eq:product-f'-f'-inv}
    \frac{\partial}{\partial x_j}(f'(x)^{-1})_{ji} f'_{ik}(x) + (f'(x)^{-1})_{ji} f''_{ikj}(x) = 0
\end{equation}
In addition, 
\begin{equation}
\label{eq:jacobi-wrt-x}
    \frac{\partial}{\partial x_k} \log |f'(x)| = (f'(x)^{-1})_{ji} f''_{ijk}(x)
\end{equation}
from Jacobi's formula, and since Hessians are symmetric in their derivatives, we can put together \cref{eq:product-f'-f'-inv} and \cref{eq:jacobi-wrt-x} to form
\begin{equation}
    \frac{\partial}{\partial x_j} (f'(x)^{-1})_{ji} f'_{ik}(x) = -\frac{\partial}{\partial x_k} \log |f'(x)|
\end{equation}
Substituting into \cref{eq:deriv-log-q-wrt-x} and integrating, we find
\begin{equation}
    \log q(x) = - \frac{1}{2} \lVert f(x) \rVert^2 + \log |f'(x)| + \text{const.}
\end{equation}
The RHS is $\log p_\theta(x)$ by the change of variables formula, and hence $p_\theta(x) = q(x)$ for all $x$. This proves statement 2.

Now we will show that all critical points are global minima.

The negative log-likelihood part of $\loss$ is bounded below by $h(q(x))$ and the reconstruction part is bounded below by zero. Hence if all critical points achieve a loss of $h(q(x))$ they are all global minima.

Since $g = f^{-1}$ for all critical points, the reconstruction loss is zero.

Since $p_\theta(x) = q(x)$ for all critical points, the negative log-likelihood loss is $h(q(x))$:
\begin{equation}
    \E_{q(x)}[-\log p_\theta(x)] = \E_{q(x)}[-\log q(x)] = h(q(x))
\end{equation}
This proves statement 3.

It now remains to show that every minimum of $\loss^{f^{-1}}$ is a critical point of $\loss^g$.

$\loss^g$ is of the form:
\begin{equation}
    \loss^g = \int \tilde \lambda(x, f, f', g) \mathrm{d}x
\end{equation}
with
\begin{equation}
    \tilde \lambda(x, f, f', g) = q(x) \left( \frac{1}{2} \lVert f(x) \rVert^2 - \tr(f'(x) \texttt{SG}(g'(f(x)))) + \beta \lVert g(f(x)) - x \rVert^2 \right)
\end{equation}
\begin{equation}
    \frac{\partial \tilde \lambda}{\partial g_i} = q(x) \cdot 2\beta (g(f(x)) - x)_i
\end{equation}
as before, and is zero with the substitution $g = f^{-1}$.

\begin{equation}
    \frac{\partial \tilde \lambda}{\partial f_i} = q(x) f_i(x)
\end{equation}
as before and
\begin{align}
    \frac{\partial \tilde \lambda}{\partial f'_{ij}} 
        &= -q(x) g'_{lk}(f(x)) \frac{\partial f'_{kl}}{\partial f'_{ij}} \\
        &= -q(x) g'_{ji}(f(x)) \\
        &= -q(x) (f'(x)^{-1})_{ji}
\end{align}
with the substitution $g'(f(x)) = f'(x)^{-1}$. Since this is the same expression as before we must have
\begin{equation}
    \frac{\partial \tilde \lambda}{\partial f_i} - \frac{\partial}{\partial x_j}\left(\frac{\partial \tilde \lambda}{\partial f'_{ij}} \right) = 0
\end{equation}
meaning that $f$ and $g$ are critical with respect to $\tilde \loss$. This shows that the critical points (and hence minima) of $\loss^{f^{-1}}$ are critical points of $\loss^g$. 

In the case where $f$ is not required to be globally invertible, $\loss^g$ may have additional critical points when $f$ is in fact not invertible (see \cref{fig:critical-points} in \cref{sec:critical-points} for an example). However, if the reconstruction loss is minimal, therefore zero, $f$ will be invertible and the above arguments hold. If this is the case, there are no additional critical points of $\loss^g$.

\end{proof}

\subsection{Ensuring Global Invertibility}
\label{app:global-invertibility}

Free-form flows use arbitrary neural networks $f_\theta$ and $g_\phi$. Since we rely on the approximation $g_\phi \approx f_\theta^{-1}$, it is crucial that the reconstruction loss is as small as possible. We achieve this in practice by setting $\beta$ large enough. In this section, we give the reasoning for this choice.

In particular, we show that when $\beta$ is too small and the data is made up of multiple disconnected components, there are solutions to $\loss^{f^{-1}}$ that are not globally invertible, even if $f_\theta$ is restricted to be locally invertible. We illustrate some of these solutions for a two-component Gaussian mixture in \cref{fig:gmm-fn-of-beta}. We approximate the density as zero more than 5 standard deviations away from each mean. When $\beta$ is extremely low the model gives up on reconstruction and just tries to transform each component to the latent distribution individually.

\begin{figure}
    \centering
    \includegraphics[width=1\linewidth]{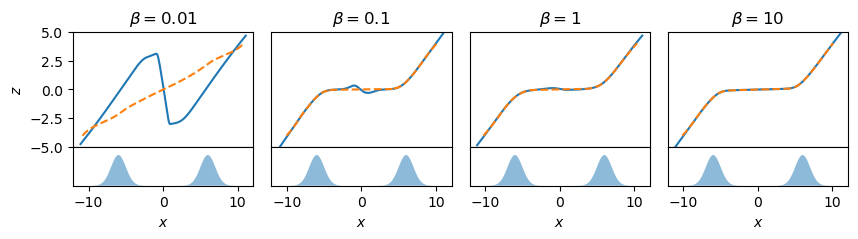}
    \caption{Solutions to $\loss^{f^{-1}}$ for various $\beta$. The data is the two-component Gaussian mixture shown in the lower panels. Solid blue lines show $f_\theta$ and dashed orange lines show $g_\phi$. Note that $f_\theta$ is not invertible between the mixtures when $\beta$ is small.}
    \label{fig:gmm-fn-of-beta}
\end{figure}

Let us now analyse the behavior of this system mathematically. Our argument goes as follows: First, we assume that the data can be split into disconnected regions. Then it might be favorable that the encoder computes latent codes such that each region covers the full latent space. This means that each latent code $z$ is assigned once in each region. This is a valid encoder function $f_\theta$ and we compute its loss $\loss^{f^{-1}}$ in \cref{appthm:partitions}. In \cref{appcor:beta-crit}, we show that when $\beta < \beta_\text{crit}$ solutions which are not globally invertible have the lowest loss. It is thus vital that $\beta > \beta_\text{crit}$ or larger to ensure the solution is globally invertible.

\begin{figure}
    \centering
    \includegraphics[width=1\linewidth]{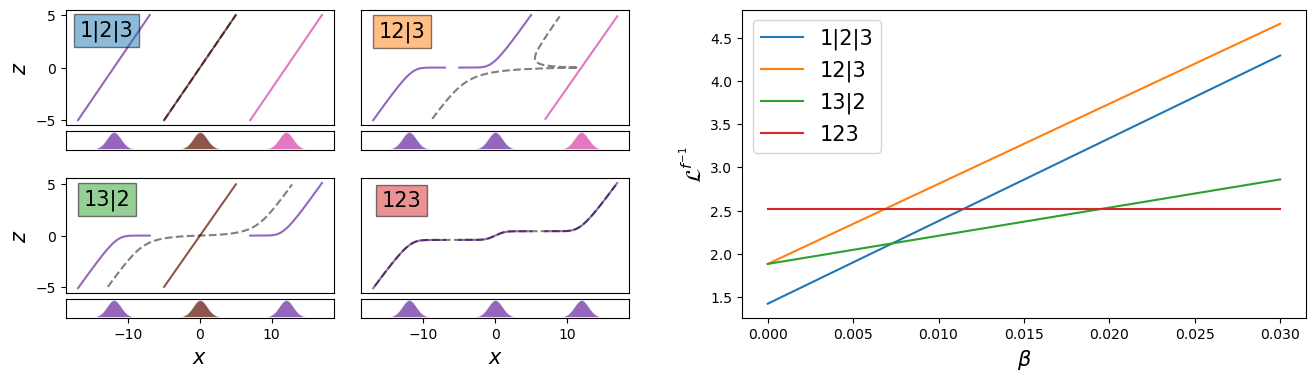}
    \caption{Intuition behind theorem \cref{appthm:partitions}: Comparison of invalid solutions to learning a Gaussian mixture of three modes with non-invertible encoders (blue, orange, green), compared to an invertible encoder (red). \textit{(Left)} As the encoder is not invertible by construction, it may learn to reuse each latent code $z$ once for each disconnected component. This reduces the negative-likelihood at each point, as the derivative $f_\theta'(x)$ is larger at each data point. The decoder (dotted gray line) then cannot reconstruct the data. \textit{(Right)} Increasing $\beta$ increases the importance of reconstruction over maximum likelihood and thus selects the best solution (red).
    }
    \label{fig:partitions-3-gaussians}
\end{figure}

\begin{theorem}
\label{appthm:partitions}
Let $f_\theta$ and $g_\phi$ be $C^1$. Suppose that $q(x)$ may not have support everywhere and allow $f_\theta$ to be non-invertible in the regions where $q(x) = 0$. Suppose the set $\mathcal{S} = \{x: q(x) > 0\}$ is made up of $k$ disjoint, connected components: $\mathcal{S} = \bigcup_{i=1}^k \mathcal{S}_i$. 

For each partition $\mathcal{P}$ of $\{\mathcal{S}_i\}_{i=1}^k$ consider solutions of $\loss^{f^{-1}}$ where
\begin{enumerate}
    \item $f_\theta$ transforms each element of the partition to $p(z)$ individually, and
    \item $g_\phi$ is chosen (given $f_\theta$) such that $R_{\min} = \E_x \left[ \lVert g_\phi(f_\theta(x)) - x \rVert^2 \right]$ is minimal 
\end{enumerate}
The loss achieved is
\begin{equation}
    \loss^{f^{-1}} = h(q(x)) - H(\mathcal{P}) + \beta R_{\min}(\mathcal{P})
\end{equation}
where $h(q(x))$ is the differential entropy of the data distribution and $H(\mathcal{P})$ is the entropy of $\alpha$ where $\alpha_i = \int_{\mathcal{P}_i} q(x) \mathrm{d}x$.
\end{theorem}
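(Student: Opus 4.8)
The plan is to compute the loss $\loss^{f^{-1}}$ achieved by the described family of solutions, treating the three terms of the loss (the squared-norm term, the log-determinant term, and the reconstruction term) separately. Recall that $\loss^{f^{-1}}$ has the same value, up to constants, as $\E_{q(x)}[\tfrac12\lVert f(x)\rVert^2 - \log|J_f(x)| + \beta\lVert g(f(x)) - x\rVert^2]$, and moreover $\E_{q(x)}[\tfrac12\lVert f(x)\rVert^2 - \log|J_f(x)|] = \KL(q(x)\parallel p_\theta(x)) + h(q(x))$, so I will track $\KL(q(x)\parallel p_\theta(x))$ instead of the first two terms individually. The reconstruction term is handled by the definition $R_{\min}(\mathcal{P})$ in the statement, so the whole task reduces to showing $\KL(q(x)\parallel p_\theta(x)) = -H(\mathcal P)$ for the encoder described in condition~1.

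First I would set up the encoder $f_\theta$ associated to a partition $\mathcal P = \{\mathcal P_1,\dots,\mathcal P_m\}$. On each block $\mathcal P_j$, the conditional data density is $q(x)/\alpha_j$ for $x \in \mathcal P_j$, where $\alpha_j = \int_{\mathcal P_j} q(x)\,\mathrm dx$, and condition~1 says $f_\theta$ restricted to $\mathcal P_j$ is a diffeomorphism onto (essentially all of) $\R^D$ pushing $q(x)/\alpha_j$ forward to $p(z) = \N(0,I)$. The pushforward of $q$ under $f_\theta$ on the whole space is then the $m$-fold overlay $\sum_j \alpha_j \cdot p(z) \cdot (\text{Jacobian bookkeeping})$; more precisely, because the model density is $p_\theta(x) = p(f_\theta(x))|J_f(x)|$ and on $\mathcal P_j$ the change-of-variables for the conditional gives $q(x)/\alpha_j = p(f_\theta(x))|J_f(x)|$, we get $p_\theta(x) = q(x)/\alpha_j$ on $\mathcal P_j$. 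Hence $\KL(q(x)\parallel p_\theta(x)) = \int q(x)\log\frac{q(x)}{p_\theta(x)}\mathrm dx = \sum_j \int_{\mathcal P_j} q(x)\log\alpha_j\,\mathrm dx = \sum_j \alpha_j \log\alpha_j = -H(\mathcal P)$.

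Combining: $\loss^{f^{-1}} = \KL(q(x)\parallel p_\theta(x)) + h(q(x)) + \beta R_{\min}(\mathcal P) = h(q(x)) - H(\mathcal P) + \beta R_{\min}(\mathcal P)$, which is exactly the claimed expression. I would also note that the critical-point/validity concerns from \cref{appthm:critical-points} do not apply here since $f_\theta$ is only locally (not globally) invertible, which is precisely the regime this theorem is meant to probe.

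The main obstacle I anticipate is making the change-of-variables bookkeeping rigorous when $f_\theta$ maps each $\mathcal P_j$ onto $\R^D$: a $C^1$ diffeomorphism from a connected open subset of $\R^D$ onto all of $\R^D$ exists (e.g. compose a smooth bijection of $\mathcal P_j$ with a Gaussian-matching map), but one must be careful that the pushforward is genuinely $p(z)$ and that boundary/measure-zero sets where $q = 0$ contribute nothing — this is where the hypothesis "$q$ need not have support everywhere, and $f_\theta$ may be non-invertible where $q = 0$" is doing the real work, and I would state explicitly that we only require the identity $p_\theta(x) = q(x)/\alpha_j$ to hold $q$-almost-everywhere, which suffices for the KL computation. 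A secondary, purely expository point is to clarify that $R_{\min}(\mathcal P)$ genuinely depends on $\mathcal P$ (the finer the partition, the worse the reconstruction, since $g_\phi$ must be single-valued), which is what drives the trade-off in \cref{appcor:beta-crit}; but since the statement packages $R_{\min}$ as a given quantity, no computation is needed here.
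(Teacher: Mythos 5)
Your proposal is correct and follows essentially the same route as the paper: both rest on the observation that under condition~1 the encoder density satisfies $p_\theta(x) = q(x)/\alpha_j$ on each block $\mathcal P_j$, from which the likelihood part of the loss evaluates to $h(q(x)) - H(\mathcal P)$ (the paper phrases this via $\loss_\text{NLL} = \sum_i \alpha_i h(q_i)$ and the entropy decomposition of $h(q)$, you via $\KL(q \parallel p_\theta) = -H(\mathcal P)$, which is the same algebra), with the reconstruction term contributing $\beta R_{\min}(\mathcal P)$ by definition. Your extra remarks on making the change-of-variables step $q$-almost-everywhere rigorous only make explicit what the paper asserts directly.
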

Note that the solutions in \cref{appthm:partitions} are not necessarily minima of $\loss^{f^{-1}}$, they just demonstrate what values it can take.

\begin{proof}
Let $\loss = \E_v[\loss^{f^{-1}}]$. The loss can be split into negative log-likelihood and reconstruction parts: $\loss = \loss_\text{NLL} + \beta \loss_\text{R}$. 

Consider a partition $\mathcal{P}$ of $\{\mathcal{S}_i\}_{i=1}^k$.
Let $q_i(x)$ be the distribution which is proportional to $q(x)$ when $x \in \mathcal{P}_i$ but zero otherwise (weighted to integrate to 1):
\begin{equation}
    q_i(x) = \frac{1}{\alpha_i} q(x) 1\!\!1_{x \in \mathcal{P}_i}
\end{equation}with 
\begin{equation}
    \alpha_i = \int_{\mathcal{P}_i} q(x) \mathrm{d}x
\end{equation}
The type of solution described in the theorem statement will be such that $p_\theta(x) = q_i(x)$ for $x \in \mathcal{P}_i$. This means that
\begin{align}
    \loss_\text{NLL} 
        &= -\int q(x) \log p_\theta(x) \mathrm{d}x \\
        &= -\sum_i \int_{\mathcal{P}_i} q(x) \log p_\theta(x) \mathrm{d}x \\
        &= -\sum_i \alpha_i \int_{\mathcal{P}_i} q_i(x) \log q_i(x) \mathrm{d}x \\
        &= \sum_i \alpha_i h(q_i(x))
\end{align}
We also have
\begin{align}
    h(q(x)) 
        &= -\sum_i \int_{\mathcal{P}_i} q(x) \log q(x) \mathrm{d}x \\
        &= -\sum_i \alpha_i \int_{\mathcal{P}_i} q_i(x) \log (\alpha_i q_i(x)) \mathrm{d}x \\
        &= \sum_i \alpha_i \left( h(q_i(x)) - \log \alpha_i \right) \\
        &= \loss_\text{NLL} + H(\alpha)
\end{align}
and therefore
\begin{equation}
    \loss_\text{NLL} = h(q(x)) - H(\mathcal{P})
\end{equation}
Clearly $\loss_\text{R} = R_{\min}(\mathcal{P})$. As a result
\begin{equation}
    \loss = h(q(x)) - H(\mathcal{P}) + \beta R_{\min}(\mathcal{P})
\end{equation}
\end{proof}

\begin{corollary}
\label{appcor:beta-crit}
Call the solution where $\mathcal{P} = \mathcal{S}$ the globally invertible solution. For this solution, $\loss^{f^{-1}} = h(q(x))$.

For a given partition $\mathcal{P}$, the corresponding solution described in \cref{appthm:partitions} has lower loss than the globally invertible solution when $\beta < \beta_\text{crit}$ where
\begin{equation}
    \beta_\text{crit} = \frac{H(\mathcal{P})}{R_{\min}(\mathcal{P})}
\end{equation}
\end{corollary}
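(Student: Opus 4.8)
The plan is to obtain the corollary as a direct specialization and comparison of the loss formula in \cref{appthm:partitions}. First I would treat the globally invertible solution as the special case where the partition $\mathcal{P}$ consists of the single element $\mathcal{S}$. Then the weight vector $\alpha$ has the single entry $\alpha_1 = \int_{\mathcal{S}} q(x)\,\mathrm{d}x = 1$, so $H(\mathcal{P}) = -\alpha_1 \log \alpha_1 = 0$; and since $f_\theta$ is now a single globally invertible map from $\mathcal{S}$ onto the latent space, the optimal decoder is $g_\phi = f_\theta^{-1}$, giving $R_{\min}(\mathcal{P}) = 0$. Plugging these into $\loss^{f^{-1}} = h(q(x)) - H(\mathcal{P}) + \beta R_{\min}(\mathcal{P})$ from \cref{appthm:partitions} yields $\loss^{f^{-1}} = h(q(x))$, which is the first assertion.

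For the comparison, I would take an arbitrary nontrivial partition $\mathcal{P}$ and invoke \cref{appthm:partitions} to conclude that the associated solution has loss $h(q(x)) - H(\mathcal{P}) + \beta R_{\min}(\mathcal{P})$. Subtracting the globally invertible value $h(q(x))$, this solution is strictly better precisely when $\beta R_{\min}(\mathcal{P}) - H(\mathcal{P}) < 0$. Since for a nontrivial partition $f_\theta$ is not injective on $\mathcal{S}$ (distinct components are each mapped onto all of latent space), no function $g_\phi$ can invert it exactly, so $R_{\min}(\mathcal{P}) > 0$ and we may divide to obtain the equivalent condition $\beta < H(\mathcal{P})/R_{\min}(\mathcal{P})$, which is exactly $\beta < \beta_\text{crit}$.

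This argument is essentially bookkeeping on top of \cref{appthm:partitions}; I do not expect any real obstacle. The only details worth spelling out are that $H(\mathcal{P}) \geq 0$, so that $\beta_\text{crit} \geq 0$ and the threshold is a genuine positive number whenever the partition carries nonzero entropy, and that $R_{\min}(\mathcal{P}) > 0$ for nontrivial $\mathcal{P}$ so that the quotient defining $\beta_\text{crit}$ is well-defined. I would close by noting the practical upshot: to rule out every such spurious, non-globally-invertible solution one should take $\beta$ above $\max_{\mathcal{P}} \beta_\text{crit}$ over all nontrivial partitions of the connected components of $\mathcal{S}$, which is the guidance referenced in \cref{app:reconstruction-weight}.
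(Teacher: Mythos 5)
Your proposal is correct and follows essentially the same route as the paper's proof: specialize \cref{appthm:partitions} to $\mathcal{P}=\mathcal{S}$ (giving $H(\mathcal{P})=0$, $R_{\min}(\mathcal{P})=0$, hence $\loss^{f^{-1}}=h(q(x))$) and then compare losses to solve for the threshold $\beta_\text{crit}=H(\mathcal{P})/R_{\min}(\mathcal{P})$. Your added remark that $R_{\min}(\mathcal{P})>0$ for a nontrivial partition (so the division is legitimate) is a small justification the paper leaves implicit, but it does not change the argument.
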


\begin{proof}
If $\mathcal{P} = \mathcal{S}$ then $\alpha = (1)$. Therefore $H(\mathcal{P}) = 0$. Since $f$ is invertible in this case, $R_{\min}(\mathcal{P}) = 0$. Therefore $\loss = h(q(x))$.

Now consider a partition $\mathcal{P} \neq \mathcal{S}$. This has loss
\begin{equation}
    \loss = h(q(x)) - H(\mathcal{P}) + \beta R_{\min}(\mathcal{P})
\end{equation}
By solving:
\begin{equation}
    h(q(x)) - H(\mathcal{P}) + \beta R_{\min}(\mathcal{P}) \leq h(q(x))
\end{equation}
we find
\begin{equation}
    \beta \leq \beta_\text{crit} = \frac{H(\mathcal{P})}{R_{\min}(\mathcal{P})}
\end{equation}
\end{proof}

\Cref{appcor:beta-crit} tells us that $\beta$ must be large enough or the minima of $\loss^{f^{-1}}$ will favor solutions which are not globally invertible. In practice, it is difficult to compute the value of $\beta_\text{crit}$ for a given partition, as well as finding the partitions in the first place, so $\beta$ must be tuned as a hyperparameter until a suitable value is found (see \cref{app:reconstruction-weight}). Note that $\beta > \beta_\text{crit}$ does not guarantee that the solution will be globally invertible and globally-invertible solutions may only be the minima of $\loss^{f^{-1}}$ in the limit $\beta \rightarrow \infty$. However, for practical purposes a large value of $\beta$ will be sufficient to get close to the globally invertible solution.

Various solutions for a three-component Gaussian mixture distribution are illustrated in \cref{fig:partitions-3-gaussians}, along with the loss values as a function of $\beta$. Here we approximate regions five or more standard deviations away from the mean as having zero density, in order to partition the space into three parts as per \cref{appthm:partitions}. We see that each solution has a region of lower loss than the globally-invertible solution when $\beta < \beta_\text{crit}$ and that $\beta$ must at least be greater than the largest $\beta_\text{crit}$ (and potentially larger) in order to avoid non-globally-invertible solutions.

While this analysis is for $\loss^{f^{-1}}$, the main conclusion carries over to $\loss^g$, namely that $\beta$ must be sufficiently large to ensure global invertibility. When optimizing $\loss^g$, large $\beta$ is especially important since the loss relies on the approximation $g_\phi \approx f_\theta^{-1}$ which is only achievable if $f_\theta$ is globally invertible.

\section{PRACTICAL GUIDE TO FREE-FORM FLOWS}
\label{app:practical-guide}

This section gives a brief overview over how to get started with adapting free-form flows to a new problem.

\subsection{Model setup}
\label{app:model-setup}

The pair of encoder $f_\theta: \R^D \to \R^D$, which represents $z = f_\theta(x)$, and decoder $g_\phi: \R^D \to \R^D$, which represents $x = g_\phi(z)$, can be any pair of dimension-preserving neural networks. Any architecture is allowed. While in principle batch-norm violates the assumptions for our theorems (because the Jacobians of each item in the batch should be independent), this works well in practice. In our experiments, we found best performance when encoder and decoder each have a global skip connection:
\begin{align}
    z &= f_\theta(x) = x + \tilde f_\theta(x) \\
    x &= g_\phi(z) = z + \tilde f_\theta(z).
\end{align}
This has the advantage that the network is initialized close to the identity, so that training starts close to the parameters where $x \approx g_\phi(f_\theta(x))$ and the reconstruction loss is already low.

\paragraph{Conditional distributions}

If the distribution to be learned should be conditioned on some context $c \in \R^C$, i.e.~$p(x|c)$, feed the context as an additional input to both encoder $f_\theta: \R^D \times \R^C \to \R^D$ and decoder  $g_\phi: \R^D \times \R^C \to \R^D$. For networks with a skip connection:
\begin{align}
    z &= f_\theta(x; c) = x + \tilde f_\theta(x; c) \\
    x &= g_\phi(z; c) = z + \tilde g_\phi(z; c).
\end{align}
If they are multi-layer networks, we observe training to be accelerated when not only the first layer, but also subsequent layers get the input.

\subsection{Training}
\label{app:training}

The PyTorch code in \cref{lst:fff-pytorch} computes the gradient of free-form flows using backward autodiff. The inputs \texttt{encode} and \texttt{decode} can be arbitrary PyTorch functions.

\begin{lstlisting}[float, language=Python, caption=PyTorch implementation of FFF gradient computation, label=lst:fff-pytorch]
import torch
from math import sqrt, prod


def change_of_variables_surrogate(x: torch.Tensor, encode, decode):
    """
    Compute the per-sample surrogate for the change of variables gradient.

    Args: see below
    Returns:
        z: Latent code. Shape: (batch_size, *z_shape)
        x1: Reconstruction. Shape: (batch_size, *x_shape)
        Per-sample surrogate. Shape: (batch_size,)
    """
    x.requires_grad_()
    z = encode(x)

    # Sample v from sphere with radius sqrt(total_dim)
    batch_size, total_dim = x.shape[0], prod(x.shape[1:])
    v = torch.randn(batch_size, total_dim, device=x.device, dtype=x.dtype)
    v *= sqrt(total_dim) / torch.sum(v ** 2, -1, keepdim=True).sqrt()
    v = v.reshape(x.shape)

    # $ g'(z) v $ via forward-mode AD
    with torch.autograd.forward_ad.dual_level():
        dual_z = torch.autograd.forward_ad.make_dual(z, v)
        dual_x1 = decode(dual_z)
        x1, v1 = torch.autograd.forward_ad.unpack_dual(dual_x1)

    # $ v^T f'(x) $ via backward-mode AD
    v2, = torch.autograd.grad(z, x, v, create_graph=True)

    # $ v^T f'(x) stop_grad(g'(z)) v $
    surrogate = torch.sum((v2 * v1.detach()).reshape(batch_size, total_dim), -1)

    return z, x1, surrogate


def fff_loss(x: torch.Tensor, encode, decode, beta: float):
    """
    Compute the per-sample MLAE loss for a latent normal distribution
    Args:
        x: Input data. Shape: (batch_size, *x_shape)
        encode: Encoder function. Takes an input `x` and returns a latent code `z` of shape (batch_size, *z_shape).
        decode: Decoder function. Takes a latent code `z` and returns a reconstruction `x1` of shape (batch_size, *x_shape).
        beta: Weight of the reconstruction error.

    Returns:
        Per-sample loss. Shape: (batch_size,)
    """
    z, x1, surrogate = change_of_variables_surrogate(x, encode, decode)
    nll = torch.sum((z ** 2).reshape(x.shape[0], -1) ** 2).sum(-1) - surrogate
    return nll + beta * ((x - x1).reshape(x.shape[0], -1) ** 2).sum(-1)
\end{lstlisting}

\subsection{Likelihood estimation}
\label{app:likelihood}

For a trained free-form flow, we are interested in how well the learnt model captures the original distribution. We would like to ask ``How likely is our model to generate this set of data?'' We can answer this question via the negative log-likelihood NLL, which is smaller the more likely the model is to generate these data points:
\begin{equation}
    \text{NLL} = -\sum_{i=1}^{N_\text{unseen}} \log p_\phi(X=x_i).
\end{equation}

For normalizing flows with analytically invertible encoder $f_\theta$ and decoder $g_\theta$, evaluating the NLL can be achieved via the change of variables of the encoder, as the encoder Jacobian determinant is exactly the inverse of the decoder Jacobian determinant:
\begin{align}
    -\log p_\theta(X=x_i) &
    = -\log p(Z=g_\theta^{-1}(x)) + \log \det g_\theta'(g_\theta^{-1}(x)) \\&
    = -\log p(Z=f_\theta(x)) - \log \det f_\theta'(x).
\end{align}
The FFF encoder and decoder are only coupled via the reconstruction loss, and the distribution of the decoder (the actual generative model) might be slightly different from the encoder. We therefore compute the change of variables with the decoder Jacobian. In order to get the right latent code that generated a data point, we use the encoder $f_\theta(x)$:
\begin{align}
    -\log p_\phi(X=x_i) &
    = -\log p(Z=g_\phi^{-1}(x)) + \log \det g_\phi'(g_\phi^{-1}(x)) \\&
    \approx -\log p(Z=f_\theta(x)) + \log \det g_\phi'(f_\theta(x)).
    \label{eq:fff-cov}
\end{align}
This approximation $f_\theta(x) \approx g_\phi^{-1}(x)$ is sufficiently valid in practice. For example, for the Boltzmann generator on DW4, we find that the average distance between an input $x$ and its reconstruction $x' = g_\phi(f_\theta(x))$ is $0.0253$. Comparing the energy $u(x)$ to the energy $u(x')$ of the reconstruction, the mean absolute difference is $0.11$, which is less than 1\% of the energy range $\max_{x \in \mathcal X_\text{test}} u(x) - \min_{x \in \mathcal X_\text{test}} u(x) = 13.7$. %

\subsection{Determining the optimal reconstruction weight}
\label{app:reconstruction-weight}

Apart from the usual hyperparameters of neural network training such as the network architecture and training procedure, free-form flows have one additional hyperparameter, the reconstruction weight $\beta$. We cannot provide a rigorous argument for how $\beta$ should be chosen at this stage.

However, we find that it is easy to tune in practice by monitoring the training negative log-likelihood over the first epoch (see \cref{eq:fff-cov}). This involves computing the Jacobian $f_\theta'(x)$ explicitly. We can then do an exponential search on $\beta$:
\begin{enumerate}
    \item If the negative log-likelihood is unstable (i.e.~jumping values; reconstruction loss typically also jumps), increase $\beta$ by a factor.
    \item If the negative log-likelihood is stable, we are in the regime where training is stable but might be slow. Try decreasing $\beta$ to see if that leads to training that is still stable yet faster.
\end{enumerate}
For a rough search, it is useful to change $\beta$ by factors of 10. We observe that there usually is a range of more than one order of magnitude for $\beta$ where the optimization converges to the same quality. We find that training with larger $\beta$ usually catches up with low $\beta$ in late training. Higher $\beta$ also ensures that the reconstruction loss is lower, so that likelihoods are more faithful, see \cref{app:likelihood}.

\section{EXPERIMENTS}
\label{app:experiments}

All our experiments can be reproduced via our public repository at \url{https://github.com/vislearn/FFF/}.

\subsection{Simulation-Based Inference}
\label{app:sbi}

Our models for the SBI benchmark use the same ResNet architecture as the neural spline flows \cite{durkan2019neural} used as the baseline. It consists of 10 residual blocks of hidden width 50 and ReLU activations. Conditioning is concatenated to the input and additionally implemented via GLUs at the end of each residual block. We also define a simpler, larger architecture which consists of 2x256 linear layers followed by 4x256 residual blocks without GLU conditioning. We denote the architectures in the following as ResNet S and ResNet L. To find values for architecture size, learning rate, batch size and $\beta$ we follow \cite{wildberger2023flow} and perform a grid search to pick the best value for each dataset and simulation budget. As opposed to \cite{wildberger2023flow} we run the full grid, but with greatly reduced search ranges, which are provided in Table \ref{tab:sbi-grid}, which amounts to a similar budget. The best hyperparameters for each setting are shown in Table \ref{tab:sbi-architecture}. Notably, this table shows that our method oftentimes works well on the same datasets for a wide range different $\beta$ values. The entire grid search was performed exclusively on compute nodes with ``AMD Milan EPYC 7513 CPU'' resources and took $\sim14.500 \text{h} \times 8 \text{ cores}$ total CPU time for a total of 4480 runs.

\begin{table}[]
    \centering
    \begin{tabular}{c|c}
          & Range \\
         \hline
         Reconstruction weight $\beta$ & $10, 25, 100, 500$ \\
         Learning rate & $\{ 1, 2, 5, 10 \} \times 10^{-4}$ \\
         Batch size & $2^2, ..., 2^8$ \\
         Architecture size & S, L$^{*}$ \\
    \end{tabular}
    \caption{Hyperparameter ranges for the grid search on the SBI benchmark. $^{*}$We only perform the search over architecture size for the 100k simulation budget scenarios.}
    \label{tab:sbi-grid}
\end{table}

\begin{table}[]
    \centering
    \begin{tabular}{l|cccc}
         Dataset & Batch size & Learning rate & $\beta$ & ResNet size \\
         \hline
         Bernouli glm & 8/32/128 & $5 \times 10^{-4}$ & 25/25/500 & S/S/L \\
         Bernouli glm raw & 16/64/32 & $5/10/10 \times 10^{-4}$ & 25/25/50 & S/S/L \\
         Gaussian linear & 8/128/128 & $5/10/1 \times 10^{-4}$ & 25/500/500 & S \\
         Gaussian linear uniform & 8/8/32 & $5/2/5 \times 10^{-4}$ & 500/10/100 & S/S/L \\
         Gaussian mixture & 4/16/32 & $5/2/10 \times 10^{-4}$ & 10/500/25 & S/S/L \\
         Lotka Volterra & 4/32/64 & $10/10/5 \times 10^{-4}$ & 500/500/25 & S \\
         SIR & 8/32/64 & $10/10/5 \times 10^{-4}$ & 500/25/25 & S \\
         SLCP & 8/32/32 & $5 \times 10^{-4}$ & 10/25/25 & S/S/L \\
         SLCP distractors & 4/32/256 & $5/10/5 \times 10^{-4}$ & 25/10/10 & S \\
         Two moons & 4/16/32 & $5/5/1 \times 10^{-4}$ & 500 & S/S/L \\
    \end{tabular}
    \caption{Hyperparamters found by the grid search for the SBI benchmark. Cells are split into the hyperparameters for all three simulation budgets, unless we use the same setting across all of them.}
    \label{tab:sbi-architecture}
\end{table}

\subsection{Molecule Generation}
\label{app:molecule-generation}

\subsubsection{$E(n)$-GNN}
\label{app:en-gnn}

For all experiments, we make use of the $E(n)$ equivariant graph neural network proposed by \cite{satorras2021equivariant} in the stabilized variant in \cite{satorras2021equivariantflow}. It is a graph neural network that takes a graph $(V, E)$ as input. Each node $v_i \in V$ is the concatenation of a vector in space $x_i \in \R^n$ and some additional node features $h_i \in \R^h$. The neural network consists of $L$ layers, each of which performs an operation on $v^l = [x_i^{l}; h_i^{l} \to x_i^{l+1}; h_i^{l+1}]$. Spatial components are transformed \textit{equivariant} under the Euclidean group $E(n)$ and feature dimensions are transformed \textit{invariant} under $E(n)$.
\begin{align}
\mathbf{m}_{i j} & =\phi_e\left(\boldsymbol{h}_i^l, \boldsymbol{h}_j^l, d_{i j}^2, a_{i j}\right), \\ 
\tilde e_{ij} &= \phi_{inf}(m_{ij}), \\ 
\boldsymbol{h}_i^{l+1}&=\phi_h\left(\boldsymbol{h}_i^l, \sum_{j \neq i} \tilde{e}_{i j} \mathbf{m}_{i j}\right), \\
\boldsymbol{x}_i^{l+1} & =\boldsymbol{x}_i^l+\sum_{j \neq i} \frac{\boldsymbol{x}_i^l-\boldsymbol{x}_j^l}{d_{i j}+1} \phi_x\left(\boldsymbol{h}_i^l, \boldsymbol{h}_j^l, d_{i j}^2, a_{i j}\right)
\end{align}
Here, $d_{ij} = \| x_i^l - x_j^l \|$ is the Euclidean distance between the spatial components, $a_{ij}$ are optional edge features that we do not use. The $\tilde e_{ij}$ are normalized for the input to $\phi_h$. The networks $\phi_e, \phi_{inf}, \phi_h, \phi_x$ are learnt fully-connected neural networks applied to each edge or node respectively.

\subsubsection{Latent distribution} 

As mentioned in \cref{sec:experiments}, the latent distribution must be invariant under the Euclidean group. While rotational invariance is easy to fulfill, a normalized translation invariant distribution does not exist. Instead, we adopt the approach in \cite{kohler2020equivariant} to consider the subspace where the mean position of all atoms is at the origin:  $\sum_{i=1}^N x_i = 0$. We then place a normal distribution over this space. By enforcing the output of the $E(n)$-GNN to be zero-centered as well, this yields a consistent system. See \cite{kohler2020equivariant} for more details.

\subsubsection{Boltzmann Generators on DW4, LJ13 and LJ55}
\label{app:bg}

We consider the two potentials
\begin{align}
    \text{Double well (DW):} \qquad v_{\mathtt{DW}}(x_1, x_2) &= \frac1{2\tau} \left( a (d - d_0) + b(d - d_0)^2 + c(d - d_0)^4 \right), \\
    \text{Lennard-Jones (LJ):} \qquad v_{\mathtt{LJ}}(x_1, x_2) &= \frac\epsilon{2\tau} \left( \left(\frac{r_m}{d}\right)^{12} - 2 \left(\frac{r_m}{d}\right)^6 \right).
\end{align}
Here, $d = \| x_1 - x_2 \|$ is the Euclidean distance between two particles. The DW parameters are chosen as $a = 0, b=-4, c=0.9, d_0 = 4$ and $\tau = 1$. For LJ, we choose $r_m = 1, \epsilon=1$ and $\tau = 1$. This is consistent with \citep{klein2023equivariant} and we use their MCMC samples as data, of which we use 400k samples for validation and 500k for testing the final model.

We give hyperparameters for training the models in \cref{tab:bg-hyperparameters}. We consistently use the Adam optimizer. While we use the $E(n)$-GNN as our architecture, we do not make use of the features $h$ because the Boltzmann distributions in question only concern positional information.
Apart from the varying layer count, we choose the following $E(n)$-GNN model parameters as follows: Fully connected node and edge networks (which are invariant) have one hidden layers of hidden width 64 and SiLU activations. Two such invariant blocks are executed sequentially to parameterize the equivariant update. We compute the edge weights $\tilde e_{ij}$ via attention. Detailed choices for building the network can be determined from the code in \cite{hoogeboom2022equivariant}.

\begin{table}
    \centering
    \begin{tabular}{c|ccc}
        & DW4& LJ13& LJ55\\ \hline
        Layer count&  20&  8&  10\\
        Reconstruction weight $\beta$&  10&  200&  500\\
        Learning rate&  0.001&  0.001&  0.001\\
        Learning rate scheduler& One cycle & - & - / Exponential $\gamma=0.99999$\\
        Gradient clip& 1& 1&0.1\\
        Batch size& 256& 256&48 \\
        Duration & 50 epochs & 400 epochs & 300k / 450k steps \\
    \end{tabular}
    \caption{Hyperparameters used for the Boltzmann generator tasks. The format ``A / B`` specifies a two-step training.}
    \label{tab:bg-hyperparameters}
\end{table}

\begin{table}
    \centering
    \begin{tabular}{c|ccc} 
        & \multirow{2}{*}{NLL ($\downarrow$)} & \multicolumn{2}{c}{Sampling time ($\downarrow$)} \\
        && Raw & incl.~$\log q_\theta(x)$ \\ \hline 
        & \multicolumn{3}{c}{DW4} \\ \hline
        $E(n)$-NF & 1.72 $\pm$ 0.01 &0.024 ms & 0.10 ms\\ %
        OT-FM & 1.70 $\pm$ 0.02  &0.034 ms& 0.76 ms\\ %
        E-OT-FM & 1.68 $\pm$ 0.01  &0.033 ms& 0.75 ms\\ %
        FFF & 1.68 $\pm$ 0.01&0.026 ms& 0.74 ms \\ %
        \hline
        
        & \multicolumn{3}{c}{LJ13} \\ \hline
        $E(n)$-NF & -16.28 $\pm$ 0.04 &0.27 ms & \textbf{1.2} ms\\ %
        OT-FM & -16.54 $\pm$ 0.03 &0.77 ms & 38 ms\\ %
        E-OT-FM & -16.70 $\pm$ 0.12 &0.72 ms& 38 ms\\ %
        FFF & \textbf{-17.09 $\pm$ 0.16}&\textbf{0.11} ms& 3.5 ms\\ %
        \hline
        
        & \multicolumn{3}{c}{LJ55} \\ \hline
        OT-FM & -88.45 $\pm$ 0.04   & 40 ms & 6543 ms\\ %
        E-OT-FM & \textbf{-89.27 $\pm$ 0.04} &40 ms & 6543 ms\\ %
        FFF & -88.72 $\pm$ 0.16 &\textbf{2.1} ms& \textbf{311} ms\\ %
    \end{tabular}
    \caption{Boltzmann generator negative log-likelihood and sampling times, including the time to compute the density from the network Jacobians. Note that in all cases, the log prob could be distilled by a $E(3)$-invariant network with scalar output for faster density estimation. NLLs are due to \cite{klein2023equivariant}. Errors are the standard deviations over runs. The other models are based on an ODE trained via maximum likelihood ($E(n)$-NF, \cite{satorras2021equivariantflow}), and trained via Optimal Transport Flow Matching with (OT-FM) or without (E-OT-FM) equivariance-aware matching \citep{klein2023equivariant}. $E(n)$-NF is too memory intensive to train on LJ55 efficiently. Expands \cref{tab:boltzmann-overview} in main text.}
    \label{tab:bg-sampling-time}
\end{table}

\subsubsection{QM9 Molecule Generation}
\label{app:qm9}

For the QM9 \citep{ruddigkeit2012, ramakrishnan2014} experiment, we again employ a $E(3)$-GNN. This time, the dimension of node features $h$ is composed of a one-hot encoding for the atom type and an ordinal value for the atom charge. Like \cite{satorras2021equivariantflow}, we use variational dequantization for the ordinal features \citep{ho2019flow}, and argmax flows for the categorical features \citep{hoogeboom2021argmax}. For QM9, the number of atoms may differ depending on the input. We represent the distribution of molecule sizes as a categorical distribution.

We again employ the $E(3)$-GNN with the same settings as for the Boltzmann generators. We use 16 equivariant blocks, train with Adam with a learning rate of $10^{-4}$ for 700 epochs. We then decay the learning rate by a factor of $\gamma=0.99$ per epoch for another 100 epochs. We set reconstruction weight to $\beta=2000$. We use a batch size of 64.

For both molecule generation tasks together, we used approximately 6,000 GPU hours on an internal cluster of NVIDIA A40 and A100 GPUs. A full training run on QM9 took approximately ten days on a single such GPU.

\subsection{Ablation Studies}
\label{app:ablation}

We study the effect of different modifications to our method in an ablation study on the MINIBOONE dataset shown in Table \ref{tab:miniboone}. Firstly, we train a classical normalizing flow, implemented as a coupling flow \cite{dinh2015nice} (Classical NF) and compare it to a model where the exact likelihood loss of \cref{eq:maximum-likelihood} has been replaced by the trace estimator of \cref{eq:loss-ML-f-inv}, but still using the exact inverse (NF + trace estimator). The resulting NLL shows that using \cref{eq:loss-ML-f-inv} provides a good estimate for the maximum likelihood objective, with only small deterioration due to the increased stochasticity. Next we train two identical coupling flow networks as FFF, meaning we no longer use their invertibility and instead rely on the reconstruction loss of \cref{eq:loss-f-inv-and-g} to learn an inverse (INN as FFF). This shows that the coupling flow architecture is suboptimal as FFF, despite a guarantee of invertibility. Finally, we show that with the right architecture (ResNet as FFF/Transformer as FFF) we can reach and even outperform coupling flows. We also include an ablation study on the effect of increasing the the number of Hutchinson samples $K$ vs. increasing batch size $bs$. Both measures reduce stochasticity of the optimizer and can lead to better performance, but we find that the effect of increasing batch size is more pronounced, and can lead to more improvements than increasing $K$ at the same cost.

\begin{table}
    \centering
    \begin{tabular}{c|c}
          Setup&  NLL\\\hline\hline
          Classical NF (\cref{eq:maximum-likelihood})& 
    10.55\\
  NF + trace estimator (\cref{eq:loss-ML-f-inv})&10.60\\
  \hline
  INN as FFF & 16.42 \\
 Transformer as FFF & 10.10 \\
  \hline
 ResNet as FFF, $K=1, bs=256$& 10.60 \\
 ResNet as FFF, $K=2, bs=256$& 10.18 \\
 ResNet as FFF, $K=1, bs=64$& 12.96 \\
 ResNet as FFF, $K=2, bs=64$& 12.30 \\
 
 \end{tabular}
    \caption{Ablation in terms of NLL on MINIBOONE: We start with a classical normalizing flow trained with exact likelihood, then add the trace estimator. Next, we compare different architectures trained as free-form flows. Finally, we compare using more Hutchinson samples $K$ and varying batch size $bs$.}
\label{tab:miniboone}
\end{table}

\subsubsection{Software libraries}

We build our code upon the following python libraries: PyTorch \citep{paszke2019pytorch}, PyTorch Lightning \citep{falcon2019pytorch}, Tensorflow \citep{tensorflow2015-whitepaper} for FID score evaluation, Numpy \citep{harris2020array}, Matplotlib \citep{hunter2007matplotlib} for plotting and Pandas \citep{mckinney2010data,reback2020pandas} for data evaluation.

\end{document}